
\documentclass{article}

\usepackage{microtype}
\usepackage{graphicx}
\usepackage{subfigure}
\usepackage{booktabs} 

\usepackage{hyperref}
\usepackage{microtype}
\usepackage{graphicx}
\usepackage{subfigure}
\usepackage{booktabs} 
\usepackage{multirow,bm}
\usepackage{tikz}
\usepackage{caption}
\usepackage{threeparttable}
\usepackage{tikz-cd,mathtools}
\usepackage{mathtools}
\usetikzlibrary{arrows, matrix} 
\usepackage{xcolor}
\definecolor{darkblue}{rgb}{0.0, 0.0, 0.55}
\definecolor{darkred}{rgb}{0.55, 0.0, 0.0}

\usepackage{amssymb}
\usepackage{mathtools}
\usepackage{amsthm}
\usepackage{multirow}
\usepackage[capitalize,noabbrev]{cleveref}

\usepackage{pifont}

\theoremstyle{plain}
\newtheorem{theorem}{Theorem}[section]

\theoremstyle{definition}

\theoremstyle{remark}


\usepackage[accepted]{icml2025}

\usepackage{amsmath}
\usepackage{amssymb}
\usepackage{mathtools}
\usepackage{amsthm}
\usepackage{algorithm}
\usepackage{algorithmic}
\usepackage{amsmath}
\usepackage[capitalize,noabbrev]{cleveref}


\usepackage[textsize=tiny]{todonotes}
\def\method{OneForecast}
\icmltitlerunning{OneForecast: A Universal Framework for Global and Regional Weather Forecasting}

\begin{document}

\twocolumn[
\icmltitle{OneForecast: A Universal Framework for Global and Regional\\Weather Forecasting}

\icmlsetsymbol{equal}{*}

\begin{icmlauthorlist}
\vskip -0.2in
\icmlauthor{Yuan Gao}{equal,thu1}
\icmlauthor{Hao Wu}{equal,thu1,tx,ustc}
\icmlauthor{Ruiqi Shu}{equal,thu1}
\icmlauthor{Huanshuo Dong}{ustc}
\icmlauthor{Fan Xu}{ustc}
\icmlauthor{Rui Ray Chen}{thu2}
\icmlauthor{Yibo Yan}{hkust,hkustgz}
\icmlauthor{Qingsong Wen}{comp}
\icmlauthor{Xuming Hu}{hkust,hkustgz}
\icmlauthor{Kun Wang}{ntu}
\icmlauthor{Jiahao Wu}{polyu}
\icmlauthor{Qing Li}{polyu}
\icmlauthor{Hui Xiong}{hkust,hkustgz}
\icmlauthor{Xiaomeng Huang}{thu1}

\end{icmlauthorlist}
\vskip -0.03in
\icmlaffiliation{thu1}{Department of Earth System Science, Ministry of Education Key Laboratory for Earth System Modeling, Institute for Global Change Studies, Tsinghua University}
\icmlaffiliation{tx}{TEG, Tencent}
\icmlaffiliation{ustc}{Department and Computer and Science, University of Science and Technology of China}
\icmlaffiliation{thu2}{Institute for Interdisciplinary Information Sciences, Tsinghua University}
\icmlaffiliation{hkust}{Department of Computer Science and Engineering, The HongKong University of Science and Technology}
\icmlaffiliation{hkustgz}{AI Thrust, The Hong Kong University of Science and Technology (Guangzhou)}
\icmlaffiliation{ntu}{School of Computer Science and Engineering, Nanyang Technological University}
\icmlaffiliation{comp}{Squirrel Ai Learning}
\icmlaffiliation{polyu}{Department of Computing, The Hong Kong Polytechnic University}

\icmlcorrespondingauthor{Xiaomeng Huang}{hxm@tsinghua.edu.cn}

\icmlkeywords{Machine Learning, ICML}

\vskip 0.1in
]



\printAffiliationsAndNotice{\icmlEqualContribution} 

\begin{abstract}
Accurate weather forecasts are important for disaster prevention, agricultural planning, etc. Traditional numerical weather prediction (NWP) methods offer physically interpretable high-accuracy predictions but are computationally expensive and fail to fully leverage rapidly growing historical data. In recent years, deep learning models have made significant progress in weather forecasting, but challenges remain, such as balancing global and regional high-resolution forecasts, excessive smoothing in extreme event predictions, and insufficient dynamic system modeling. To address these issues, this paper proposes a global-regional nested weather forecasting framework (OneForecast) based on graph neural networks. By combining a dynamic system perspective with multi-grid theory, we construct a multi-scale graph structure and densify the target region to capture local high-frequency features. We introduce an adaptive messaging mechanism, using dynamic gating units to deeply integrate node and edge features for more accurate extreme event forecasting. For high-resolution regional forecasts, we propose a neural nested grid method to mitigate boundary information loss. Experimental results show that OneForecast performs excellently across global to regional scales and short-term to long-term forecasts, especially in extreme event predictions. Codes link: \url{https://github.com/YuanGao-YG/OneForecast}.

\end{abstract}

\section{Introduction}
Accurate weather forecasting is crucial for disaster prevention, optimizing agriculture and energy planning, and ensuring water resource management~\cite{chen2023fengwu,pathak2022fourcastnet,ahmed2021improved}. Traditional Numerical Weather Prediction (NWP) methods~\cite{bauer2015quiet} rely on the numerical solution of atmospheric dynamic equations~\cite{achatz2023multiscale,buzzicotti2023spatio}, ensuring consistency across spatiotemporal scales from a physical perspective. However, with the growing volume of observational and historical data, and the increasing demand for high-resolution and long-term forecasts, NWP methods often struggle with computational costs and fail to fully leverage the potential value of vast data.

\begin{figure}[t]
  \centering
\includegraphics[width=0.95\linewidth]{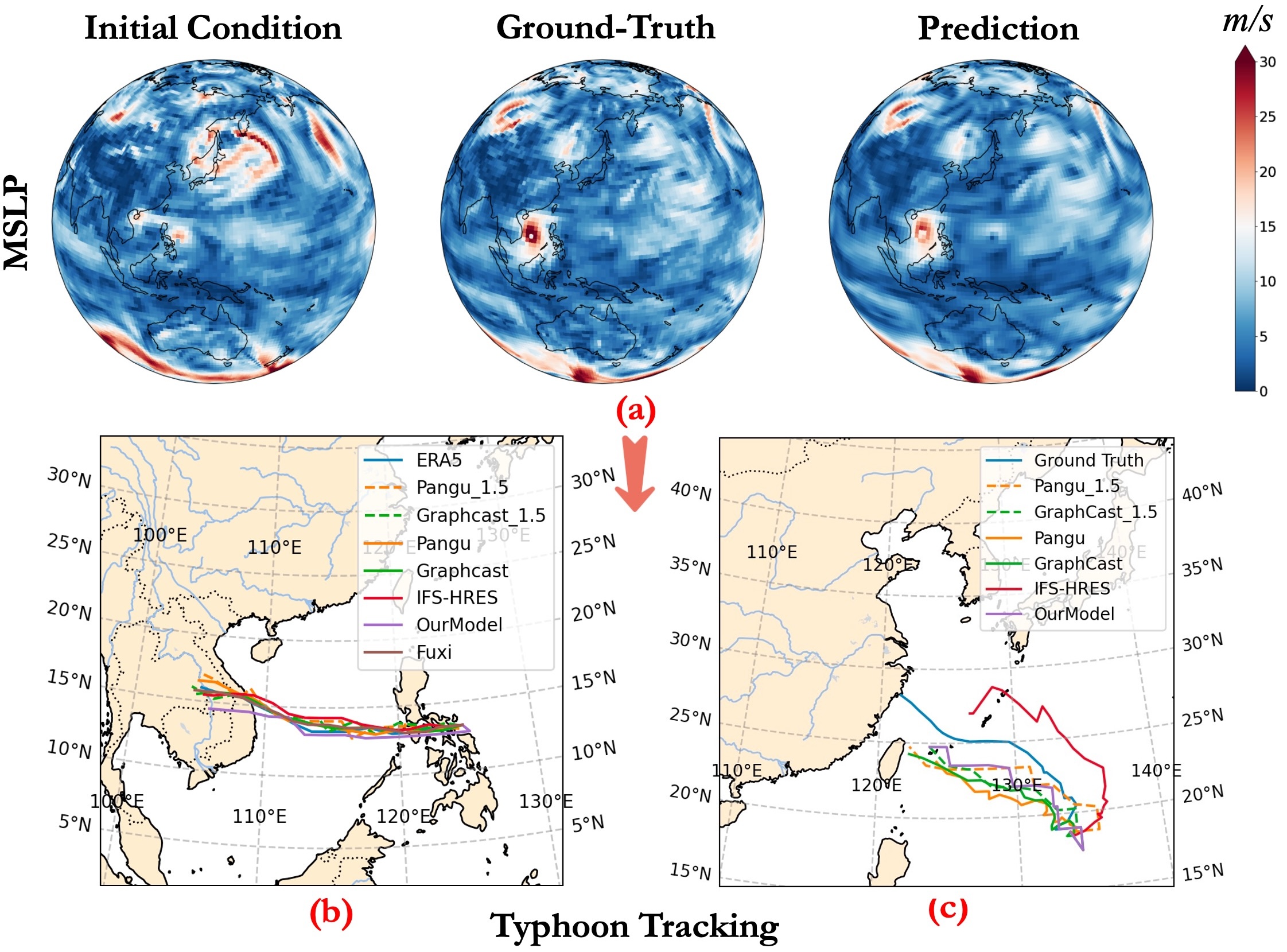}
\vspace{-10pt}
\caption{Forecast results of extreme typhoons. (a) OneForecast's predicted wind speed for Typhoon Molva (2020) at 850 hPa pressure level with a 60-hour lead time. (b)–(c) the predicted cyclone tracks of Typhoon Yagi (2018) and Typhoon Molva (2020) using different models}
  \label{fig:icml_intro}
\vspace{-15pt}
\end{figure}

Recent developments in deep learning (DL) models offer new perspectives for weather forecasting. Early spatio-temporal prediction algorithms~\cite{wu2024earthfarsser}, such as ConvLSTM~\cite{10.5555/2969239.2969329} and PredRNN~\cite{wang2022predrnn}, focus on regional precipitation. Recent large-scale scientific computing models, like Pangu-weather~\cite{bi2023accurate}, GraphCast~\cite{lam2023learning}, and NowcastNet~\cite{zhang2023skilful}, achieve significant results in medium- and short-term forecasts and show high potential in extreme event prediction (e.g., precipitation and Typhoon track prediction)~\cite{chen2024machine, espeholt2022deep,gong2024cascast, wu2024neural}. However, pure AI methods still face several core challenges:

\ding{182}~\textbf{\textit{Global and regional high-resolution forecasts are hard to balance.}} Regional predictions often lack boundary information, making it difficult to effectively \underline{nest} global data. \ding{183}~\textbf{\textit{Extreme events and long-term forecasts suffer from over-smoothing.}} They fail to capture high-frequency disturbances, leading to reduced forecast accuracy. \ding{184}~\textbf{\textit{Lack of dynamic system modeling capability.}} This is especially true for capturing complex interactions between nodes at multiple scales and learning high-frequency node-edge features.

To address these challenges, we propose~\method{}, a global-regional nested weather forecasting framework based on Graph Neural Networks (GNNs). Inspired by heuristic learning from numerical methods, we construct a multi-scale graph structure based on dynamical systems and multi-grid theory~\cite{he2019mgnet,hemgno}, refining it for the target region to capture local high-frequency features with greater detail. Additionally, to solve issues like over-smoothing in extreme events and long-term forecasts, which hinder the capture of high-frequency disturbances, we introduce an adaptive information propagation mechanism. This mechanism deepens the integration of node and edge features through dynamic gating units. Finally, for regional high-resolution forecasting, we adopt a nested grid strategy~\cite{phillips1973strategy} that inherits large-scale background information from the global scale, significantly alleviating the boundary information loss. \textit{Through this integrated framework, we aim to effectively capture high-frequency features and extreme events across global to regional scales, as well as from short-term to long-term forecasts.}

The method most similar to ours is Graph-EFM~\cite{oskarsson2024probabilistic}. It also uses a hierarchical graph neural network for global and regional weather modeling. However, for high-resolution regional forecasts, it treats global low-resolution data as non-trainable forcing conditions, making it unable to adaptively couple multi-scale information based on actual needs. And it doesn't treats the forecasts of the global model in the region as forcing, which unable to fully utilize the information of the global model. In contrast, our neural nested grid method applies trainable local refinement to the target region in the network structure. It also updates boundary and background information with global model future forcing dynamically through end-to-end training during global-regional coupling. This design better captures the interaction between large-scale global backgrounds and high-frequency regional details. Our experiments (Sec~\ref{Regional}) show that our method achieves greater stability and accuracy in long-term rolling inference.

The contribution of this paper can be summarized as follows:~(1)~\textit{Global-Regional Unified Forecasting Framework.} We propose a Graph Neural Network method that supports both global scale and regional high-resolution forecasting, achieving high-accuracy results for multi-scale and multi-time frame weather forecasts within the same framework. (2)~\textit{Adaptive Information Propagation Mechanism.} Through Dynamic Gating Units and graph attention modules, we deeply integrate node and edge features, more accurately capturing extreme events and other high-frequency disturbance signals within the multi-scale graph structure. As shown in Figure~\ref{fig:icml_intro}, OneForecast delivers better performance in tracking extreme events like typhoons. (3)~\textit{Nested Grid and Long-Term Forecasting.} By using a nested grid to merge global and regional information, we overcome the boundary loss issue in regional forecasting. This method effectively mitigates the loss of details caused by over-smoothing in long-term forecasts.

\section{Method}
\begin{figure*}
  \centering
  \includegraphics[width=1\linewidth]{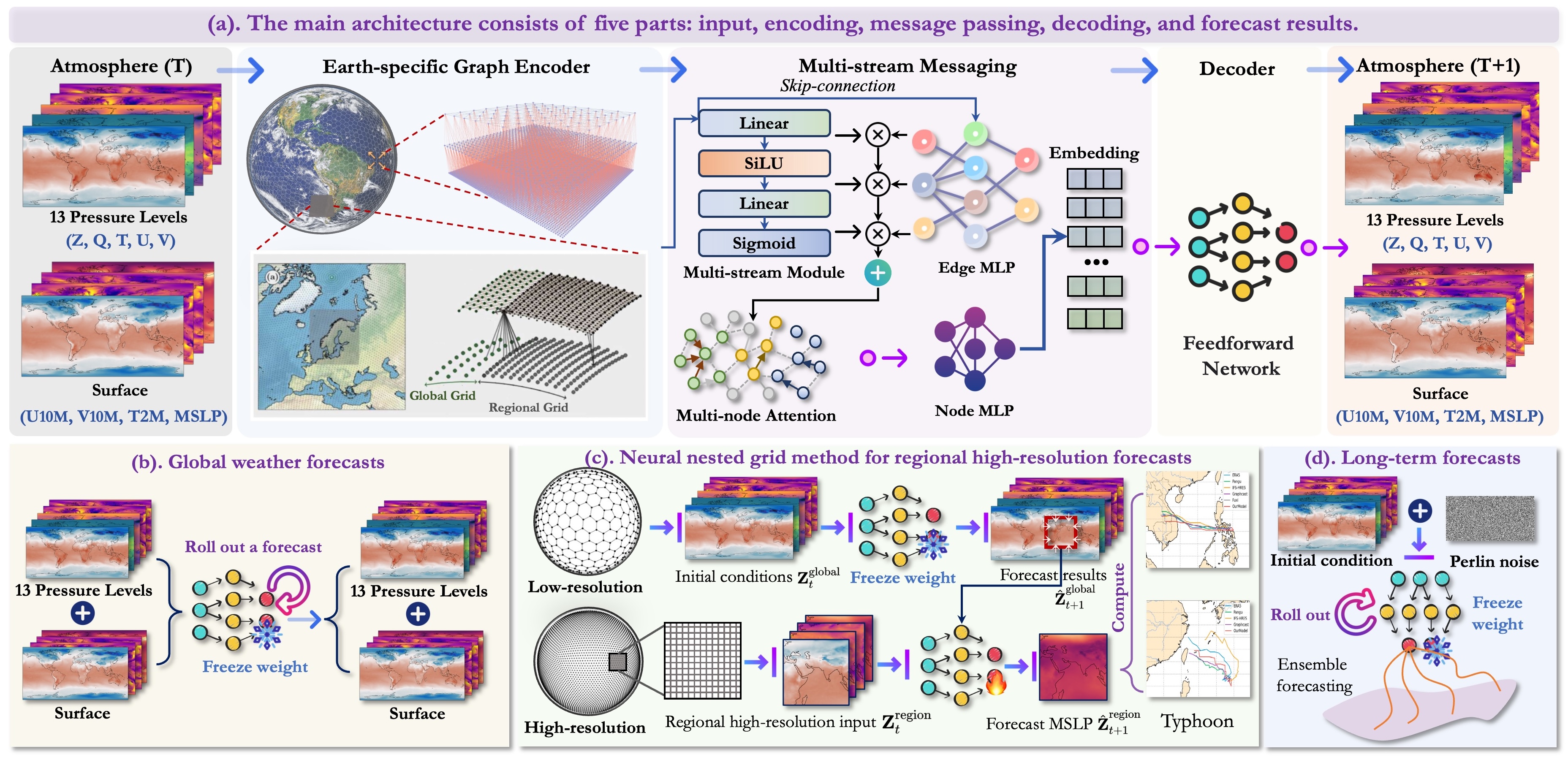}
     \vspace{-16pt}
  \caption{Overview of Our~\method{}. \textbf{(a)} The overall architecture includes input variables, an encoder, a message passing module, a decoder, and visualization of forecast variables; \textbf{(b)} The global forecasts module uses rollout technology to generate future forecasts; \textbf{(c)} The neural nested grid method specializes in regional high-resolution weather forecasts tasks; and \textbf{(d)} The ensemble forecasting module generates long-term forecast results.}
  \label{fig:ICML_yuan}
\vspace{-10pt}
\end{figure*}

\textbf{Problem Definition}\label{sec:problem}
In this study, we model weather forecasting as an autoregressive problem~\cite{lam2023learning}. At each time step $t$, we use the meteorological state comprising surface variables $\mathbf{X}_t$ and pressure level variables $\mathbf{P}_t$ to forecast the state at the next time step. We concatenate the surface and pressure level variables along the channel dimension to form the combined input: $\mathbf{Z}_t = [\mathbf{X}_t, \mathbf{P}_t] \in \mathbb{R}^{N \times d}$, where $N = H \times W$ represents the number of grid locations (nodes), and $d = d_x + d_p$ is the total number of variables. Here, $d_x$ and $d_p$ are the numbers of surface level and pressure level variables, respectively. In our setup, the initial input contains 69 variables: 4 surface level variables and 65 pressure level variables. Our model aims to forecast the combined variables at the next time step $\hat{\mathbf{Z}}_{t+1}$ using the current input $\mathbf{Z}_t$, capturing the spatiotemporal evolution of the atmosphere: $\hat{\mathbf{Z}}_{t+1} = \text{Model}(\mathbf{Z}_t; \Theta),$ where $\Theta$ denotes the model parameters. The training objective is to minimize the relative mean squared error (MSE) between the forecasts and the true values across all time steps: $    \min_{\Theta} \frac{1}{T} \sum_{t=0}^{T-1} \frac{ \left\| \hat{\mathbf{Z}}_{t+1} - \mathbf{Z}_{t+1} \right\|_2^2}{\left\| \mathbf{Z}_{t+1} \right\|_2^2}$. During inference, we adopt a rollout strategy to forecast longer sequences. Starting from the initial state $\mathbf{Z}_0$, the model recursively uses its previous forecasts as the next input: $ \hat{\mathbf{Z}}_{t+1} = \text{Model}(\hat{\mathbf{Z}}_t; \Theta), \quad t = 0, 1, 2, \dots, T-1.$ This strategy allows the model to generate extended weather forecasts using its own forecasts.

\subsection{Earth-specific Region Refined Graph Encoder}
In the encoder of~\method{}, inspired by~\cite{fortunato2022multiscale} ~\cite{lam2023learning}, we introduce an Earth-specific Region Refined Multi-scale Graph to improve the interaction of node features in complex dynamical systems. Inspired by the idea of multigrid methods~\cite{he2019mgnet}, we construct a multi-level Graph Neural Network architecture that includes grids of multiple granularities. Each grid has the same number of nodes but different grid densities, thereby capturing spatial features at different scales. Specifically, we define the multi-scale graph structure as:
\begin{equation}
\mathcal{G}=\left(\mathcal{V}^G, \mathcal{V}, \mathcal{E}^{(1)}, \mathcal{E}^{(2)}, \ldots, \mathcal{E}^{(L)}, \mathcal{E}^{(R)}, \mathcal{E}^{\mathrm{G} 2 \mathrm{M}}, \mathcal{E}^{\mathrm{M} 2 \mathrm{G}}\right),
\end{equation}
where ${\mathcal{V}}^{G}$ represents the set of lat-lon grid nodes, with a total of $N = H \times W$ nodes; $\mathcal{V}$ represents the mesh nodes. $\mathcal{E}^{(l)}$ denotes the edge set at the $l$-th scale, corresponding to grids of different granularities, where $l = 1, 2, \dots, L$, and $\mathcal{E}^{(R)}$ represents regional refined edges. $\mathcal{E}^{\mathrm{G} 2 \mathrm{M}}$ and $\mathcal{E}^{\mathrm{M} 2 \mathrm{G}}$ are the unidirectional edges that connect lat-lon grid nodes and mesh nodes. All scales share the same set of nodes $\mathcal{V}$. More details can be found in Appendix \ref{Appendix:model_details}.

In the encoder, we first map the input meteorological state $\mathbf{Z}_t \in \mathbb{R}^{N \times d}$ to the initial node feature representation:
\begin{equation}
\mathbf{h}_i^{(0)} = \phi (\mathbf{Z}_{t, i}), \quad i = 1, 2, \ldots, N,
\end{equation}
where $\phi(\cdot)$ is the feature mapping function, and $\mathbf{Z}_{t,i}$ is the input feature at node $i$. Next, we iteratively update the node features on the multi-scale graph structure. At iteration $k$, the feature update formula for node $i$ is:
\begin{equation}
\mathbf{h}_i^{(k)} = \sigma \left( \sum_{l=1}^L \sum_{j \in \mathcal{N}_i^{(l)}} \mathbf{W}^{(l)} \mathbf{h}_j^{(k-1)} + \mathbf{b}^{(l)} \right),
\end{equation}
where $\mathcal{N}_i^{(l)}$ is the set of nodes adjacent to node $i$ at the $l$-th scale, $\mathbf{W}^{(l)}$ is the weight matrix at the $l$-th scale, $\mathbf{b}$ is the bias term, and $\sigma(\cdot)$ is the activation function. To enhance the forecasting accuracy in specific regions, we introduce a region-refined grid on the finest global grid. For nodes within the target region, we add denser edge connections to capture local high-frequency features. In this way, the update of node features not only considers global multi-scale information but also incorporates region-specific fine-grained information.

\subsection{Multi-stream Messaging}
To address the issue of information transmission between nodes in complex dynamic systems, we propose a module called \textit{Multi-stream Messaging} (MSM). This module consists of an adaptive messaging mechanism, including a dynamic multi-head gated edge update module and a multi-head node attention mechanism. And OneForecast includes 16 MSMs for messaging.

\textbf{Dynamic Multi-head Gated Edge Update Module.} Unlike traditional message passing methods based on MLPs, we introduce dynamic gating and multi-head mechanisms to control the information flow more precisely. For each edge, we concatenate its own features with those of the source node and the target node:
\begin{equation}
\mathbf{c}_i = \operatorname{Concat}\left( \mathbf{e}_i, \mathbf{h}_{s(i)}, \mathbf{h}_{d(i)} \right) \in \mathbb{R}^{D_e + 2 D_h},
\end{equation}
where $\mathbf{e}_i$ is the feature of edge $i$, $\mathbf{h}_{s(i)}$ and $\mathbf{h}_{d(i)}$ are the features of the source and target nodes of edge $i$, $D_e$ is the edge feature dimension, and $D_h$ is the node feature dimension. Next, we generate gating vectors through a two-layer MLP to regulate the information flow. Specifically, the gating vector is divided into three parts: edge feature update gate $g_e$, source node feature gate $g_s$, and destination node feature gate $g_d$. First, we perform the first layer linear transformation and activation:
\begin{equation}
\mathbf{z}_i = \operatorname{SiLU}\left( \mathbf{W}_1 \mathbf{c}_i + \mathbf{b}_1 \right),
\end{equation}
where $\mathbf{W}_1 \in \mathbb{R}^{h \times (D_e + 2 D_h)}$ is the weight matrix of the first layer, and $\mathbf{b}_1 \in \mathbb{R}^h$ is the bias term. Then, we perform the second layer linear transformation and Sigmoid activation:
\begin{equation}
\mathbf{g}_i = \sigma\left( \mathbf{W}_2 \mathbf{z}_i + \mathbf{b}_2 \right) \in \mathbb{R}^{3 H D},
\end{equation}
where, $\mathbf{W}_2 \in \mathbb{R}^{3 H D\times h}$ is the weight matrix of the linear transformation of the second layer, and $D$ is the feature dimension for each gate. For each head $h$, the gating values $\mathbf{g}_i^{(h,e)}$, $\mathbf{g}_i^{(h,s)}$, and $\mathbf{g}_i^{(h,d)}$ are vectors of dimension $D$, corresponding to the edge feature gate, source node feature gate, and destination node feature gate, respectively. Subsequently, we use Edge Sum MLP (ESMLP)~\cite{pfaff2020learning} to perform linear transformation and nonlinear activation on the edge features to generate the updated edge features:
\begin{equation}
\mathbf{e}_i' = \operatorname{ESMLP}_e\left( \mathbf{e}_i, \mathbf{h}_{s(i)}, \mathbf{h}_{d(i)} \right) \in \mathbb{R}^{D_e'},
\end{equation}
where $D_e'$ is the dimension of the updated edge features. Finally, we combine the gating vectors and the updated edge features to generate the final updated edge features through weighted averaging and residual connections:
\begin{equation}
\begin{aligned}
\mathbf{e}_i^{\text{new}} = &\frac{1}{3} \sum_{h=1}^H \bigg(
\mathbf{g}_i^{(h,e)} \odot \mathbf{e}_i^{\prime}
+ \mathbf{g}_i^{(h,s)} \odot \mathbf{h}_{s(i)} \\
&\quad + \mathbf{g}_i^{(h,d)} \odot \mathbf{h}_{d(i)}
\bigg) + \mathbf{e}_i,
\end{aligned}
\end{equation}
where $\odot$ denotes element-wise multiplication.

\textbf{Multi-head Node Attention Mechanism.} Compared to traditional message passing mechanisms, multi-head attention mechanisms can more precisely capture complex dependencies between nodes and dynamically adjust the way information is aggregated through attention weights. For each edge $e_i = (j \rightarrow k)$, we use a MLP to calculate the attention score:
\begin{equation}
\mathbf{a}_i = \operatorname{MLP}_a\left( \mathbf{e}_i^{\text{new}} \right) \in \mathbb{R}^H.
\end{equation}
Then, we normalize the attention scores:
\begin{equation}
\alpha_i^{(h)} = \frac{\exp\left( \mathbf{a}_i^{(h)} \right)}{ \sum_{e_j \in \mathcal{E}(k)} \exp\left( \mathbf{a}_j^{(h)} \right) }, \quad \forall h = 1, 2, \dots, H,
\end{equation}
where $\mathcal{E}(k)$ denotes the set of all incoming edges to node $k$, and $\alpha_i^{(h)}$ is the attention weight of the $h$-th attention head for edge $e_i$. Next, we perform weighted aggregation of the edge features. For each node $k$, based on the attention weights, we compute the weighted sum of the features of all edges incoming to node $k$, generating the aggregated feature for each head:
\begin{equation}
\mathbf{m}_k^{(h)} = \sum_{e_i \in \mathcal{E}(k)} \alpha_i^{(h)} \cdot \mathbf{e}_i^{\text{new}} \in \mathbb{R}^{D_e'}.
\end{equation}
Then, we flatten and concatenate the aggregated features from all heads:
\begin{equation}
\mathbf{M}_k = \operatorname{Flatten}\left[ \mathbf{m}_k^{(1)}, \mathbf{m}_k^{(2)}, \dots, \mathbf{m}_k^{(H)} \right] \in \mathbb{R}^{D_e' \cdot H}.
\end{equation}
Finally, we concatenate the aggregated edge features with the original node features and, through a MLP, perform a nonlinear transformation to generate the updated node features:
\begin{equation}
\mathbf{h}_k^{\text{new}} = \operatorname{MLP}_n\left( \operatorname{Concat}\left( \mathbf{M}_k, \mathbf{h}_k \right) \right) + \mathbf{h}_k.
\end{equation}
In summary, in each iteration, we use the multi-stream messaging module to update the node features. Specifically, the node feature update formula is:
\begin{equation}
\mathbf{h}_i^{(k)} = \sigma \left( \sum_{l=1}^{L} \operatorname{MSM}\left( \mathbf{h}_i^{(k-1)}, \mathcal{E}^{(l)} \right) + \mathbf{b} \right),
\end{equation}
where $\operatorname{MSM}$ represents the aforementioned multi-stream messaging operation, $\mathcal{E}^{(l)}$ is the set of edges at the $l$-th scale, and $\sigma(\cdot)$ is the activation function. In the region-refined graph structure, for nodes within the target region, we additionally consider the set of edges within the region $\mathcal{E}^{\text{region}}$ to capture finer local information.

\textbf{Theoretical Analysis.} From a theoretical perspective, we explain why our method helps capture high-frequency information. This enhances long-term prediction ability and improves the ability to detect extreme events.
\begin{theorem}\label{thm:bias}
\textbf{High-pass Filtering Property of Multi-stream Messaging.} Considering the improved multi-stream message passing mechanism, suppose the graph signal $\bm{f} \in \mathbb{R}^N$ has a spectrum $\hat{\bm{f}} = \bm{U}^\top \bm{f}$ under the graph Fourier basis $\bm{U} = [\bm{u}_1, ..., \bm{u}_N]$, where $\bm{L} = \bm{U} \bm{\Lambda} \bm{U}^\top$ is the normalized graph Laplacian matrix and $\bm{\Lambda} = \operatorname{diag}(\lambda_1, ..., \lambda_N)$ is its eigenvalue diagonal matrix ($0 \leq \lambda_1 \leq ... \leq \lambda_N \leq 2$). Define the frequency response function of the message passing operator as $\rho: \lambda \mapsto \mathbb{R}$. If the dynamic gating weights satisfy:

\begin{equation}
    g^{(h,e)}_i, g^{(h,s)}_i, g^{(h,d)}_i \propto |\lambda_i - 1| + \epsilon \quad (\epsilon > 0)
\end{equation}

then there exist constants $\alpha > 0$ and $\kappa > 0$ such that the frequency response of the operator satisfies:

\begin{equation}
    \rho(\lambda_i) \geq \alpha |\lambda_i - 1| \quad \text{and} \quad \rho(\lambda_i) \geq \kappa \lambda_i
\end{equation}

that is, the operator is a strictly high-pass filter.
\end{theorem}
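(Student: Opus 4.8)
The plan is to pass to the graph spectral domain. Concretely, I would (i) reduce one multi-stream messaging block to a linear operator $\bm{T}$ acting on the node features, (ii) use the gating hypothesis to show that $\bm{T}$ is diagonalized by the Laplacian eigenbasis $\bm{U}$, and (iii) read off the two claimed lower bounds from the resulting frequency response $\rho$.

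For step (i), I would linearize the block: replace $\phi$, the ESMLP, $\operatorname{MLP}_a$ and $\operatorname{MLP}_n$ by bounded linear maps (absorbing scalar constants), keep a single representative attention head, and retain the residual connections appearing in the edge-update and node-update equations. The softmax normalization of the attention scores makes the edge-to-node aggregation row-stochastic, so after symmetric renormalization it acts as $\bm{I}-\bm{L}$; and the gated edge construction, through its source- and destination-node gates, supplies an antisymmetric channel $\mathbf{h}_{s(i)}-\mathbf{h}_{d(i)}$ whose aggregation over the incoming edges of a node equals $-\bm{L}\mathbf{h}$. Collecting the head-wise gates into $\bm{G}=\operatorname{diag}(g_i)$ and folding constants, the update becomes $\mathbf{h}\mapsto\bm{T}\mathbf{h}$ with $\bm{T}=\bm{I}+\bm{G}\bm{L}$.

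For step (ii), note that the hypothesis $g^{(h,e)}_i,g^{(h,s)}_i,g^{(h,d)}_i\propto|\lambda_i-1|+\epsilon$ reads, in the eigenbasis, as $\bm{G}=\psi(\bm{L})$ with $\psi(\lambda)=c(|\lambda-1|+\epsilon)$, $c>0$. Since $\psi(\bm{L})$ and $\bm{L}$ are functions of the same matrix they commute, so $\bm{T}=\bm{U}\rho(\bm{\Lambda})\bm{U}^\top$ with frequency response
\[
\rho(\lambda)=1+c\bigl(|\lambda-1|+\epsilon\bigr)\lambda .
\]
For step (iii), observe that on $\lambda\in[0,2]$ we have $1+c|\lambda-1|\lambda\ge 0$, hence $\rho(\lambda)\ge c\epsilon\lambda=:\kappa\lambda$ with $\kappa=c\epsilon>0$; and since $|\lambda-1|\le 1$ there while $\rho(\lambda)\ge 1$, we get $\rho(\lambda)\ge|\lambda-1|\ge\alpha|\lambda-1|$ for any $\alpha\in(0,1]$. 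Because $\rho$ increases away from the mid-band point $\lambda=1$ and is maximal at $\lambda=2$ while staying bounded below by $\kappa\lambda$, the operator suppresses only the mid band and amplifies the high-frequency end, which is the asserted strictly high-pass behavior.

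I expect the real work to be step (i): showing rigorously that the nonlinear, multi-head, gated, MLP-based message passing collapses to $\bm{I}+\bm{G}\bm{L}$ up to controlled error. Two points need care. First, that the softmax-normalized aggregation is symmetrically renormalizable to $\bm{I}-\bm{L}$ (one must control the degree weighting). Second --- and this is where the adaptivity of the gates is essential --- that the edge update actually activates the antisymmetric $\mathbf{h}_{s(i)}-\mathbf{h}_{d(i)}$ component; a purely symmetric (sum) aggregation would instead yield the low-pass response $1-\lambda$, so the proof must use the sign/magnitude flexibility of $\bm{G}$. I would also remark that the residual $\bm{I}$ term is exactly what rescues the bound $\rho\ge\alpha|\lambda-1|$ near $\lambda=0$, where $\bm{G}\bm{L}$ alone vanishes.
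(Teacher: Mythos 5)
Your strategy is the same spectral one the paper itself uses---pass to the Laplacian eigenbasis, read the gating hypothesis as a spectral multiplier, and extract the two lower bounds from a frequency response---but you carry it one level further than the paper does. The paper's proof never commits to a concrete operator: it writes the block as an abstract kernel sum $(\mathbf{G}\mathbf{f})_i=\sum_j \mathcal{K}_{ij}f_j$, simply \emph{asserts} that this operator is diagonalized by $\bm{U}$ with some response $\rho(\lambda_\ell)$, and then argues qualitatively from $g\propto|\lambda-1|+\epsilon$ that constants $\alpha,\kappa$ exist, without ever exhibiting $\rho$ or the constants. You instead posit the explicit linearization $\bm{T}=\bm{I}+\bm{G}\bm{L}$, use that $\bm{G}=\psi(\bm{L})$ commutes with $\bm{L}$, and obtain a closed-form response $\rho(\lambda)=1+c(|\lambda-1|+\epsilon)\lambda$ together with explicit constants $\kappa=c\epsilon$ and $\alpha\in(0,1]$. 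Conditional on your step (i), your argument is therefore more complete and more checkable than the paper's own; what each approach ``buys'' is that the paper avoids ever defending a specific linear model of the block (at the cost of proving nothing concrete), while you get concrete constants at the cost of owning that modelling step.

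That step (i) is the genuine gap, and you are right to flag it---but note the paper does not fill it either, so you are not missing something the paper supplies. Two points deserve emphasis. First, the gates are produced by a sigmoid and the attention weights by a softmax, so both are strictly positive; the antisymmetric channel $\mathbf{h}_{s(i)}-\mathbf{h}_{d(i)}$, i.e.\ the $+\bm{G}\bm{L}$ sign that makes your response high-pass rather than the low-pass $1-\lambda$, can only arise from sign choices inside the learned linear maps, which neither the architecture nor the theorem's hypothesis constrains; so $\bm{T}=\bm{I}+\bm{G}\bm{L}$ is a modelling assumption, not a derivation. Second, the hypothesis itself conflates indices (edge/head gates $g^{(h,\cdot)}_i$ versus eigenvalues $\lambda_i$), and your reading $\bm{G}=\psi(\bm{L})$ is a reinterpretation---the same one the paper makes implicitly. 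Finally, your closing narrative overstates what your own $\rho$ shows: $\rho(0)=1$, $\rho(1)=1+c\epsilon$, and on $[0,1]$ the response peaks near $\lambda\approx(1+\epsilon)/2$, so nothing is attenuated below unit gain and the mid band is not the minimum; the ``strictly high-pass'' claim holds only in the weak sense of the two stated lower bounds---which, to be fair, is all the theorem (and the paper's proof) actually asserts.
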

The proof of Theorem~\ref{thm:bias} can be found in Appendix~\ref{appendix_theorems}.
\subsection{Decoding and Optimization}
The decoder's goal is to decode the latent information back to meteorological variables on the latitude-longitude grid. We obtain the updated feature representation $\mathbf{h}_i$ for each node. For each node $i$, the decoder applies the mapping function:
\begin{equation}
\hat{\mathbf{Z}}_{t+1, i} = \psi(\mathbf{h}_i),
\end{equation}
where $\psi(\cdot)$ is an MLP that converts the latent node features into the predicted variables $\hat{\mathbf{Z}}_{t+1, i}$ for the next time step.

We use relative $L_2$ loss function for model training. The loss function is defined as:
\begin{equation}
\mathcal{L} = \frac{1}{K H W} \sum_{k=1}^K  \sum_{i=1}^H \sum_{j=1}^W \frac{\left( \hat{x}_{i, j, k}^{t + l \delta t} - x_{i, j, k}^{t + l \delta t} \right)^2}{\left( x_{i, j, k}^{t + l \delta t} \right)^2},
\end{equation}
where $\hat{x}_{i, j, k}^{t + l \delta t}$ and $x_{i, j, k}^{t + l \delta t}$ are the predicted and true values for variable (channel) $k$ at spatial location $(i, j)$ and time $t + l \delta t$; $K$ is the number of variables (channels); $H$ and $W$ are the height and width of the spatial dimensions, respectively; $\delta t$ is the time interval of single-step prediction (we use $\delta t = 6$ hours).

\subsection{Downstream Tasks}

We consider three principal downstream tasks:

\textbf{Global Weather Forecasting.} As detailed in Section~\ref{sec:problem} and illustrated in Figure~\ref{fig:ICML_yuan}(b), we employ a rollout approach during inference, using the trained model for multi-step extrapolation. Specifically, starting from the initial state~$\mathbf{Z}_0$, the model recursively uses its previous predictions as inputs for subsequent time steps, generating a sequence of future global weather forecasts.

\textbf{Regional High-Resolution Forecasting.} To enhance the accuracy of high-resolution forecasts in specific regions, we propose a neural nested grid method, illustrated in Figure~\ref{fig:ICML_yuan}(c). This method combines global low-resolution future forecasts with regional high-resolution data to produce detailed forecasts for the target region. We first input the global low-resolution data at time~$t$ into the pre-trained global model to obtain the global forecasts $\hat{\mathbf{Z}}_{t+1}^{\text{g}}$ at time~$t+1$. We extract $\hat{\mathbf{Z}}_{t+1}^{\text{global1}}$ from $\hat{\mathbf{Z}}_{t+1}^{\text{g}}$, which shares the same spatial range as the region, and $\hat{\mathbf{Z}}_{t+1}^{\text{global2}}$, which includes the boundary of the region (the boundary are defined as two grid points around the region). Both $\hat{\mathbf{Z}}_{t+1}^{\text{global1}}$ and $\hat{\mathbf{Z}}_{t+1}^{\text{global2}}$ are then interpolated to match the resolution of the high-resolution regional data, which are concatenate as $\hat{\mathbf{Z}}_{t+1}^{\text{global}}$ to acted as global forcing. We then combine the regional high-resolution data at time~$t$ with the $\hat{\mathbf{Z}}_{t+1}^{\text{global}}$ to form the input of the regional model. The global forecasts provide the necessary boundary conditions for the regional forecasts. The regional model then produces the high-resolution forecasts for the regional state at time~$t+1$:
\begin{equation}\small
       \hat{\mathbf{Z}}_{t+1}^{\text{region}} = \text{Model}_{\text{region}}\left( \operatorname{Concat}\left( \hat{\mathbf{Z}}_{t+1}^{\text{global}},\, \mathbf{Z}_t^{\text{region}} \right);\, \Theta_{\text{region}} \right).
\end{equation}

\textbf{Long-Term and Esemble Weather Forecasting.} The initial condition of the atmospheric state is uncertain, so reasonable quantification of this uncertainty is conducive to improve to forecast performance. To account for the uncertainty in the atmospheric initial state for long-term ensemble forecasting, we generate~$N$ perturbed initial conditions~$\mathbf{Z}_0^{(n)}$ by adding Perlin noise~$\varepsilon^{(n)}$ to~$\mathbf{Z}_0$~\cite{chen2023fuxi}. Each perturbed initial condition is input into the model, and through recursive rollout over~$T$ time steps, we obtain individual forecasts~$\hat{\mathbf{Z}}_{t+1}^{(n)}$. Finally, at each time step~$t+1$, we compute the ensemble mean prediction~$ \hat{\mathbf{Z}}_{t+1}^{\text{ensemble}} = \frac{1}{N} \sum_{n=1}^N \hat{\mathbf{Z}}_{t+1}^{(n)}$ by averaging the forecasts from all~$N$ ensemble members. In this work, we set N=50.

\section{Experiments}
\begin{table*}[t]
    \caption{In global weather forecasting task, we compare the performance of our OneForecast with 3 baselines, which are trained in the same framework. The average results for all 69 variables of RMSE (normalized) and ACC are recorded. A small RMSE ($\downarrow$) and a bigger ACC ($\uparrow$) indicate better performance. The best results are in \textbf{bold}, and the second best are with \underline{underline}.
}
    \small
    \label{tab:mainres}
    \vspace{-5pt}
    \vskip 0.13in
    \centering
    \begin{small}
        \begin{sc}
            \renewcommand{\multirowsetup}{\centering}
            \setlength{\tabcolsep}{2.8pt} 
            \begin{tabular}{l|cc|cc|cc|cc|cc}
                \toprule
                \multirow{4}{*}{Model} & \multicolumn{10}{c}{Metric}  \\
                \cmidrule(lr){2-11}
                &  \multicolumn{2}{c}{6-hour} & \multicolumn{2}{c}{1-day} & \multicolumn{2}{c}{4-day} & \multicolumn{2}{c}{7-day} & \multicolumn{2}{c}{10-day}   \\
                \cmidrule(lr){2-11}
               & RMSE& ACC & RMSE& ACC & RMSE& ACC & RMSE& ACC & RMSE& ACC \\

                \midrule
                Pangu-weather~\cite{bi2023accurate} &0.0826&0.9876 &0.1571&0.9581&0.3380&0.8167&0.5092&0.5738&0.6215&0.3542     \\
                Graphcast~\cite{lam2023learning} &\underline{0.0626}  &\underline{0.9928} &\underline{0.1304} &\underline{0.9705}  &\underline{0.2861} &\underline{0.8705} &\underline{0.4597}&\underline{0.6692} &\underline{0.6009}  &\underline{0.4275}    \\
                Fuxi~\cite{chen2023fuxi} &0.0987 & 0.9820 &0.1708&0.9511  &0.4128&0.7379& 0.5972&0.4446&  0.6981&0.2391  \\
                \midrule
                \method{}(ours) &\textbf{0.0549} &\textbf{0.9943} &\textbf{0.1231}&\textbf{0.9737}&\textbf{0.2732}&\textbf{0.8825}& \textbf{0.4468}&\textbf{0.6888}&\textbf{0.5918}&\textbf{0.4457}  \\
                \midrule
                \method{}(Promotion) &12.24\% &0.15\% &5.54\%&0.33\%&4.50\%&1.38\%& 2.81\%&2.92\%&1.51\%&4.25\%  \\

                \bottomrule
            \end{tabular}
        \end{sc}
	\end{small}
    \vspace{-5pt}
\end{table*}

\begin{figure*}[h]
\centering
\includegraphics[width=1\linewidth]{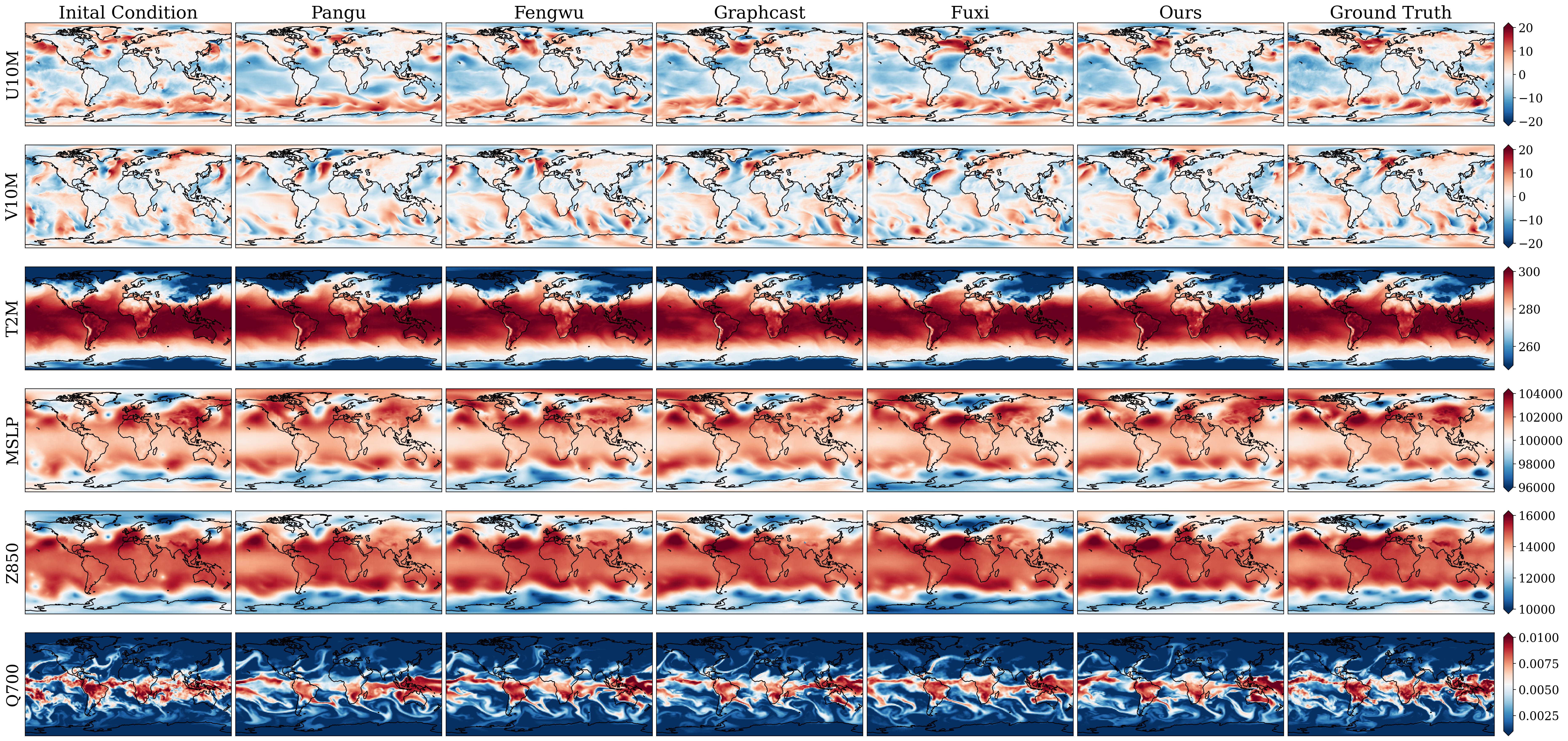}
\vspace{-20pt}
\caption{10-day forecast results of different models.}
\label{fig:visual_results}
\vspace{-15pt}
\end{figure*}

\begin{figure*}[h]
        \centering
        \includegraphics[width=1\linewidth]{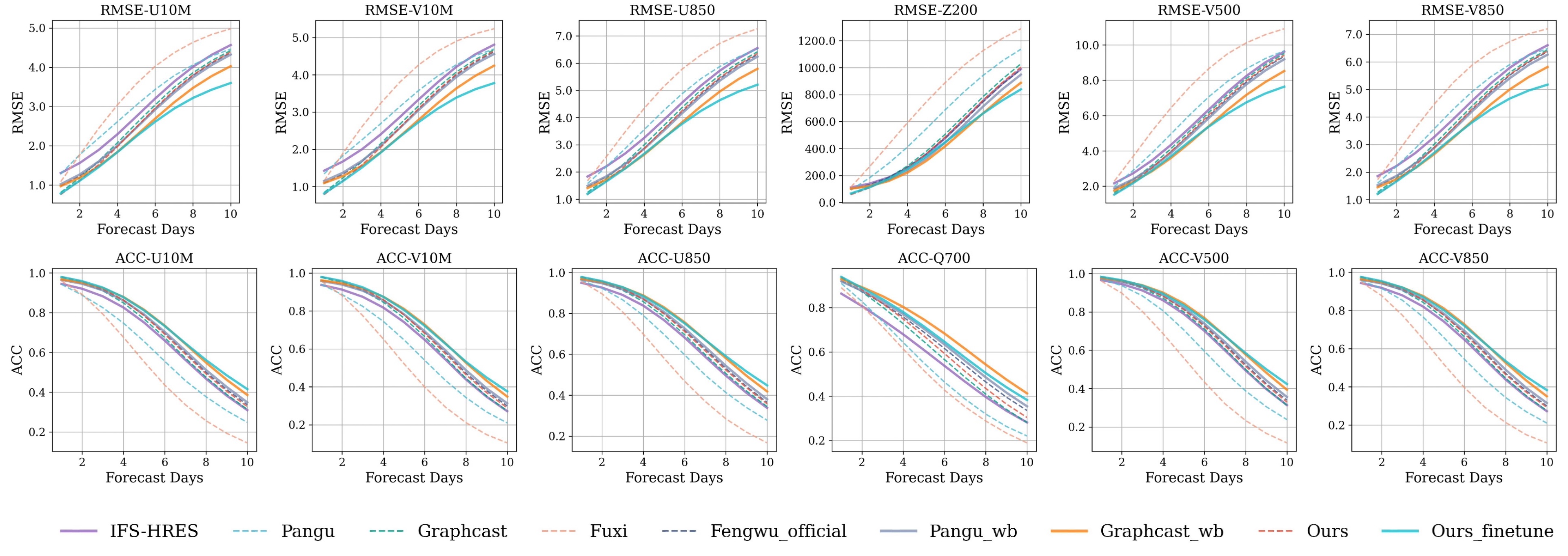}
        \vspace{-20pt}
        \caption{We select the latitude-weighted RMSE (lower is better) and ACC (higher is better) of several variables.}
        \label{fig:acc_rmse}
\vspace{-10pt}
\end{figure*}

    In this section, we extensively evaluate the performance of OneForecast, covering metric results, visual results, and extreme event analysis. We conduct all experiments on 128 NVIDIA A100 GPUs.

\subsection{Benchmarks and Baselines}
    We conduct the experiments on the WeatherBench2~\cite{rasp2024weatherbench} benchmark, a subset of the fifth generation of ECMWF Reanalysis Data (ERA5)~\cite{hersbach2020era5}. The subset we use includes years from 1959 to 2020, which is 1959-2017 for training, 2018-2019 for validating, and 2020 for testing. We use 5 pressure level variables (each with 13 pressure levels), geopotential (Z), specific humidity (Q), temperature (T), U and V components of wind speed (U and V), and 4 surface level variables 10-meter U and V components of wind (U10M and V10M), 2-meter temperature (T2M), and mean sea-level pressure (MSLP). For the global weather forecasting task, we choose the 1.5° (121 × 240 for the global data) version of WeatherBench2 as our dataset. For convenience, we just choose 120 × 240 data to train models. For the regional high-resolution weather forecasting task, we use the original 0.25° (721 × 1440 for the global data) ERA5 data. More details can be found in the Appendix \ref{appendix:data}. We conduct 2 types comparison, the first type is the comparison between 1-step supervised models retrained using the same framework and settings, which includes Pangu~\cite{bi2023accurate}, Graphcast~\cite{lam2023learning}, Fuxi~\cite{chen2023fuxi}, and Ours. The second type is the comparison between the results released by WeatherBench2 (with many finetune tricks), Fengwu~\cite{chen2023fengwu} (results released by the author), and our finetune model.

\subsection{Comparison with state-of-the-art methods}
    We utilize four metrics, RMSE, ACC, CSI, and SEDI to evaluate the forecast performance. More details can be found in \ref{appendix_metrics}. Since the magnitudes of different variables vary greatly, we first normalize the 69 variables and then calculate the indicators for the 1300 initial conditions. As shown in Table \ref{tab:mainres}, OneForecast achieves satisfactory performance compared with the state-of-the-art models. As shown in Figure \ref{fig:visual_results}, OneForecast are closer with the ground truth. We also show the forecast results of several important variables in Figure \ref{fig:acc_rmse}, which are not normalized. Our OneForecast performs better than other models. More results compared with WeatherBench2 can be found in Appendix \ref{appendix_vs_wb2}. This improvement is primarily attributed to integrating the proposed adaptive message passing module, which enhances OneForecast’s ability to model the relationships between atmospheric states across different regions of the earth that allows for the simulation of atmospheric dynamics at various spatial and temporal scales adaptively. In summary, the forecasts exhibit more consistency with the actual physical field, effectively mitigate over-smoothing, and demonstrate superior predictive performance, particularly for extreme atmospheric values.    %

\subsection{Regional High Resolution Forecast}\label{Regional}
    Although the training cost of low-resolution forecast models is relatively low, their prediction results lack sufficient details. However, directly training on high-resolution regional data often results in poor regional forecasts performance due to issues such as missing boundary conditions and limited data samples. Although the regional forecasts method proposed by ~\cite{oskarsson2024probabilistic} improves prediction accuracy, it remains constrained by the absence of global information in high-resolution regional models. In contrast, our proposed Neural Nesting Grid method (NNG) incorporates boundary conditions and global future information, enabling more accurate high-resolution regional predictions. Furthermore, NNG makes full use of the forecast results of global models, which achieves high-resolution regional forecasts at an exceptionally low training cost. Therefore, as shown in Figure \ref{fig:quyu_com}, we only demonstratively conduct high-resolution predictions for two regional variables without requiring training on all variables (e.g., the 69 variables used for training global models). It can be seen that the poor long-term inference results are poor when only using regional data for training. Graph-EFM takes into account boundary conditions and the effect is improved. And our proposed NNG not only takes into account regional boundary conditions, but also makes full use of the future forecast information of the global model, which achieves stable long-term forecast performance.

    \begin{figure}[h]
\centering
\includegraphics[width=0.99\linewidth]
{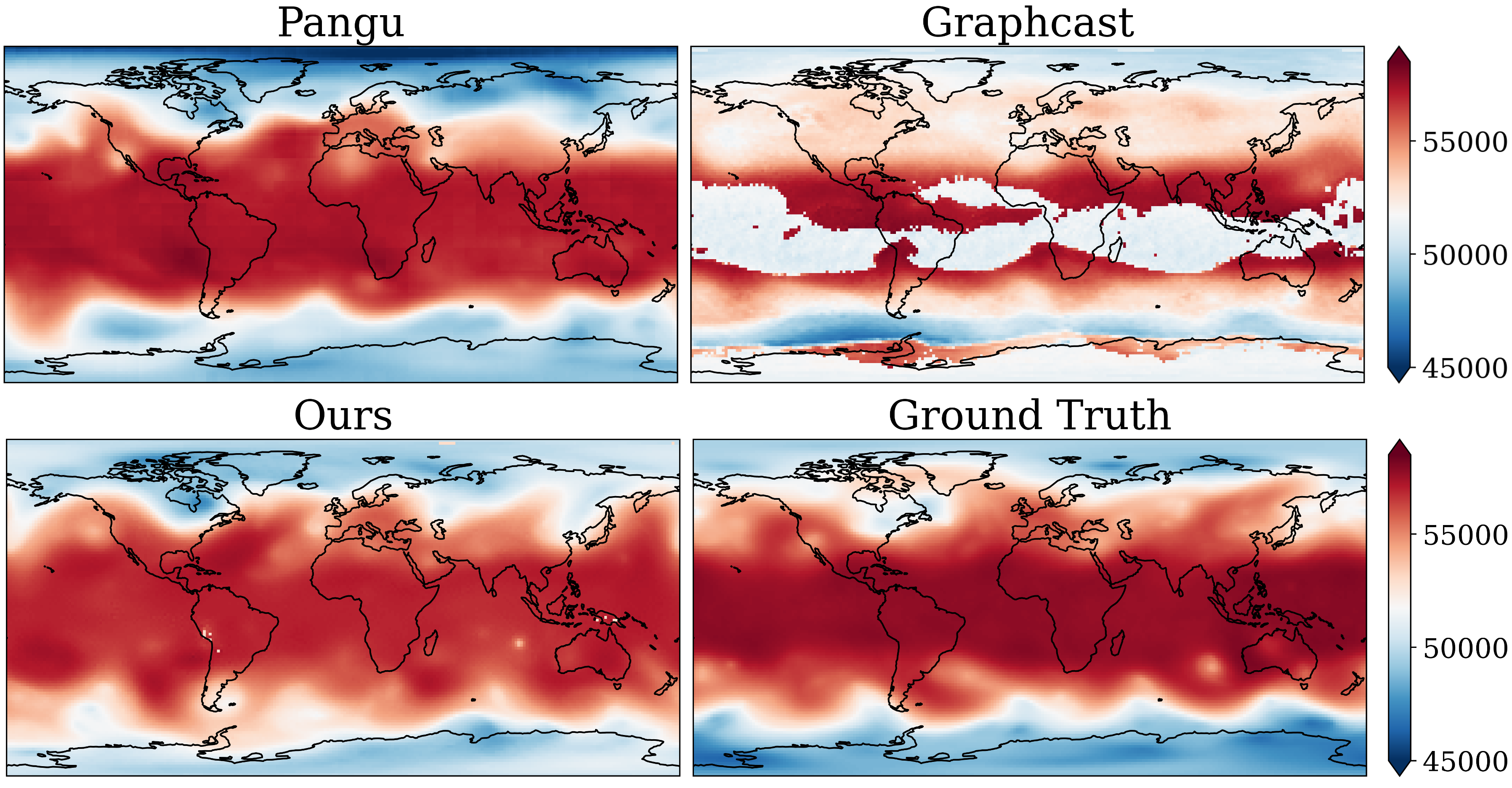}
\vspace{-3mm}
\caption{Comparison results of 100-day forecasts between the two best models and our OneForecast.}
\label{fig:100days}
\vspace{-3mm}
\end{figure}

\begin{figure*}[t]
\centering
\includegraphics[width=1\linewidth]{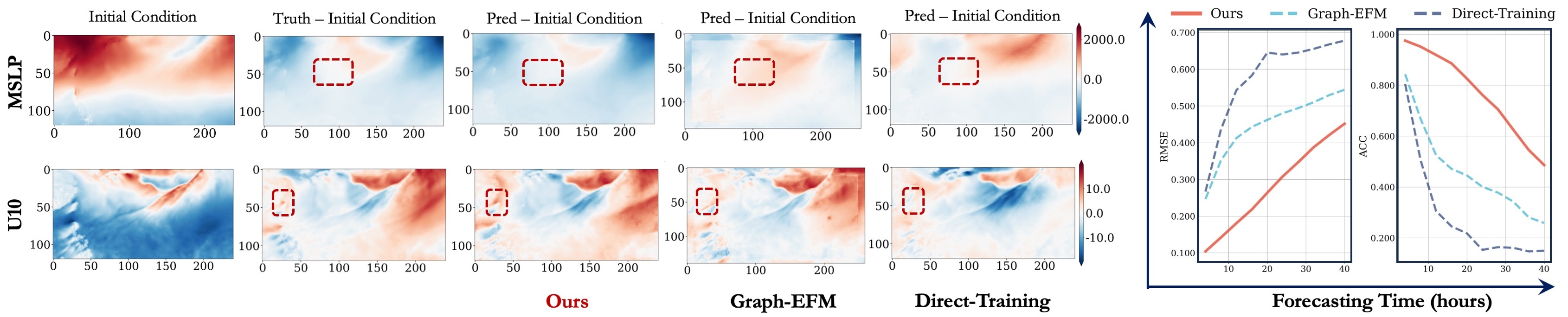}
\vspace{-20pt}
\caption{High-resolution regional results. In the left figure, we select two variables, MSLP and U10M, for visualization. We compare our model with Graph-EFM and the method that directly trains on high-resolution data. The right figure shows line charts of RMSE and ACC for different models over time. These two figures demonstrate that our proposed neural nesting grid method achieves the best performance.}
\label{fig:quyu_com}
\vspace{-5pt}
\end{figure*}

\subsection{Extreme Events Assessment}
    Extreme events, such as tropical cyclones, can significantly impact human society. In this section, we evaluate our model's ability in forecasting those extreme cyclones. As shown in Figure \ref{fig:icml_intro}, OneForecast achieves competitive performance in typhoon tracking during two extreme events, Yagi (2018) and Molva (2020). For Yagi, due to Best Track~\cite{ying2014overview}~\cite{lu2021western} doesn't report intact track, we treat ERA5 as the ground truth. For Molva, we treat Best Track as the ground truth. The details of tracking algorithm can be found in Appendix \ref{appendix:typhoon}. Additionally, we download the forecast results of baseline models (e.g., Pangu, Fuxi and Graphcast) from WeatherBench2, which is trained using high resolution (0.25°) data, to better illustrate the performance of the baselines. Although OneForecast uses lower-resolution (1.5°) data, which may limit its capacity to predict cyclones, it nevertheless shows strong forecast skills in tracking tropical cyclones comparing with the baselines. 

    \begin{table}[t]
    \vspace{-3mm}
    \caption{Comparison results of RMSE between deterministic forecasts and ensemble forecasts (ENS), the best results are in \textbf{bold}.}
    \small
    \label{tab:ens}
    \vspace{-5pt}
    \vskip 0.13in
    \centering
    \begin{small}
        \begin{sc}
            \renewcommand{\multirowsetup}{\centering}
            \setlength{\tabcolsep}{3.8pt} 
            \begin{tabular}{l|cccc}
            \toprule
            Model             & \multicolumn{4}{c}{Forecast Day}                                                                                                                           \\ \cmidrule{2-5} 
                              & 7-day                               & 8-day                               & 9-day                               & 10-day                              \\ \midrule
            Pangu             & 0.4875                              & 0.5321                              & 0.5742                              & 0.6213                              \\
            Pangu (ENS)       & 0.4435                              & 0.4743                              & 0.4979                              & 0.5205                              \\
            Graphcast         & 0.4440                              & 0.4923                              & 0.5346                              & 0.5823                              \\
            Graphcast (ENS)   & 0.4412                              & 0.4759                              & 0.5072                              & 0.5331                              \\
            Fuxi              & 0.5928                              & 0.6314                              & 0.6604                              & 0.6968                              \\
            Fuxi (ENS)        & 0.4898                              & 0.5175                              & 0.5353                              & 0.5498                              \\ \midrule
            OneForecast       & \multicolumn{1}{l}{\textbf{0.4268}}          & \multicolumn{1}{l}{0.4834}          & \multicolumn{1}{l}{0.5313}          & \multicolumn{1}{l}{0.5809}          \\
            OneForecast (ENS) & \multicolumn{1}{l}{0.4393} & \multicolumn{1}{l}{\textbf{0.4699}} & \multicolumn{1}{l}{\textbf{0.4951}} & \multicolumn{1}{l}{\textbf{0.5167}} \\ \bottomrule
            \end{tabular}
                    \end{sc}
                \end{small}
\vspace{-6mm}
\end{table}

\subsection{Long-term and Ensemble Forecasts}
    As shown in Figure \ref{fig:100days}, we evaluate long-term forecasts with the two best models on Z500 (500 hPa Geopotential). Pangu exhibits patch artifacts in 100-day forecasts, while GraphCast experiences error accumulation that degrades the forecasted physical fields, rendering them physically implausible over time. In contrast, OneForecast achieves stable long-term forecast performance, effectively capturing large-scale atmospheric states without the aforementioned issues. These results highlight OneForecast's superior capability in maintaining accurate and physically consistent predictions over extended forecasting horizons. In Table \ref{tab:ens}, we show the results for 10 initial conditions (starting from 00:00 UTC 1 January 2020, and the interval between consecutive initial conditions is 12 hour). The ensemble forecast (ENS) results are averaged from 50 members. Obviously, in most cases (especially for longer time), the forecasting performance is enhanced for each model when uncertainty is incorporated, and OneForecast still achieves the best performance.

\subsection{Ablation Studies}
    To verify the effectiveness of the proposed method, as shown in Table \ref{tab:ablation}, we conduct detailed ablation experiments. We introduce the following model variants: (1) \textbf {OneForecast w/o Region Refined Graph}, we remove the region refined mesh from the finest mesh and compute the regional metrics. (2) \textbf {OneForecast w Region Refined Graph}, we reserve the region refined mesh. (3) \textbf {OneForecast R w/o NNG}, we remove the neural nested grid method (NNG) in the regional forecasts and only use the regional data to train the model. (4) \textbf {OneForecast R BF}, we remove the NNG in the regional forecasts and only use the boundary forcing method to train the model. (5) \textbf {OneForecast R w NNG}, regional forecast model with NNG. (6) \textbf {OneForecast w/o MSM}, we remove the multi-stream messaging module (MSM) and use a traditional MLP-based messaging module. (7) \textbf {OneForecast}, the full version of OneForecast for the global forecasts. For (1) and (2), we only evaluate the region-refined data. For (3), (4), and (5), we only evaluate the specific regional data. For (6) and (7), we evaluate the global data. And these results are based on 4-day forecasts for 100 ICs. Experimental results show that whether it is a global or regional forecast task, the lack of any component will degrade the performance of OneForecast, which proves the effectiveness of the proposed method. And as shown in Figure \ref{fig:high_low}, the proposed MSM can better capture of high and low frequency information, which achieves satisfactory results in long-term forecasts and extreme event forecasts, such as typhoons.

\begin{table}[t]
    \caption{Ablation studies on 1.5° WeatherBench2 benchmark, the best results are in \textbf{bold}.}
    \small
    \label{tab:ablation}
    \vspace{-3pt}
    \vskip 0.13in
    \centering
    \begin{small}
        \begin{sc}
            \renewcommand{\multirowsetup}{\centering}
            \setlength{\tabcolsep}{1.1pt} 
            \begin{tabular}{l|cc}
                \toprule
                Variants                             & RMSE & ACC \\ 
                \midrule
                OneForecast w/o Region Refined Graph &0.3793      &0.6075     \\
                OneForecast w Region Refined Graph   &\textbf{0.2609}      &\textbf{0.8099}\\ 
                \midrule
                OneForecast R w/o NNG                &0.5828      &0.2450     \\
                OneForecast R BF                     &0.4428      &0.4711     \\
                OneForecast R w NNG                  &\textbf{0.2180}      &\textbf{0.8856}     \\ 
                \midrule
                OneForecast w/o MSM                  &0.3921      &0.9305     \\
                OneForecast                          &\textbf{0.2954}      &\textbf{0.9577}     \\
                \bottomrule
            \end{tabular}
        \end{sc}
    \end{small}
\vspace{-2mm}
\end{table}

\begin{figure}[h]
\centering
\includegraphics[width=0.95\linewidth]{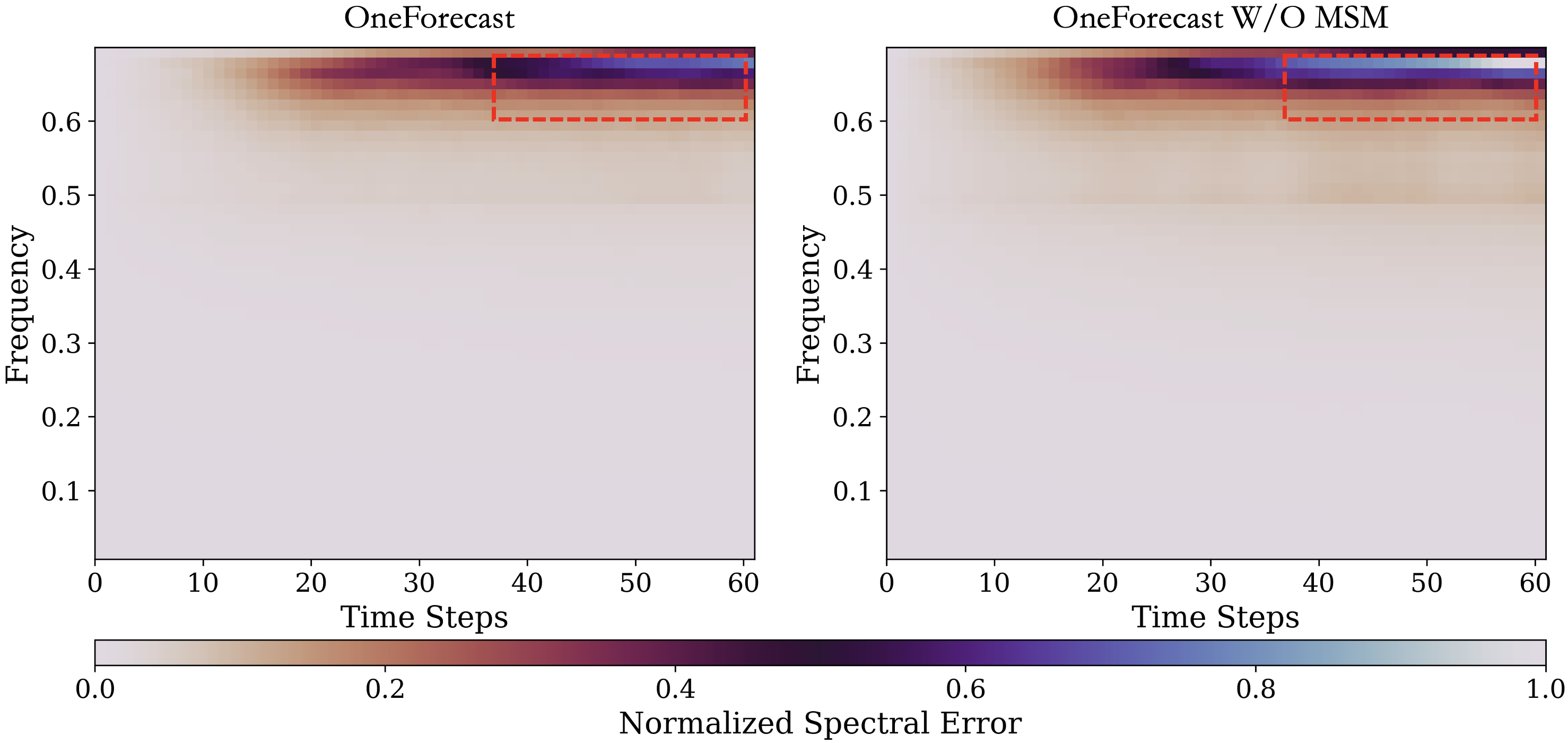}
\vspace{-6pt}
\caption{Normalized spectral error of the proposed MSM and the traditional MLP-based massaging.}
\label{fig:high_low}
\vspace{-4mm}
\end{figure}

\section{Conclusion}
In this paper, we propose \method{}, a global-regional nested weather forecasting framework leveraging multi-scale graph neural networks. By integrating dynamical systems principles with multi-grid structures, our approach refines target regions for improved capture of high-frequency features and extreme events. The adaptive information propagation mechanism, featuring dynamic gating units, mitigates over-smoothing and enhances node-edge feature representation. Additionally, the proposed neural nested grid method preserves global information for regional forecasts, effectively relieves the loss of boundary information, which improves the regional forecast performance. Empirical results show that the proposed \method{} achieves higher prediction accuracy at global and regional scales, especially for long-term and extreme event predictions, marking a step towards more robust data-driven weather forecasting.
\clearpage

\nocite{langley00}
\section*{Acknowledgements}
This work was supported by the National Natural Science Foundation of China (42125503, 42430602).
\section*{Impact Statement}
This paper presents work whose goal is to advance the field of Machine Learning. There are many potential societal consequences of our work, none which we feel must be specifically highlighted here. In addition, we propose OneForecast, a global-regional weather forecasting model based on graph neural networks. In the future, we will further study the combination of graph neural networks and dynamic systems to improve the accuracy of forecasts, especially for extreme event forecasts.

\bibliography{example_paper}
\bibliographystyle{icml2025}

\newpage
\appendix
\onecolumn

\section{Proofs of Theorems}
\label{appendix_theorems}
\begin{theorem}[High-pass Filtering Property of Multi-stream Messaging]
\label{thm:bias}
Consider an improved multi-stream message passing mechanism. 
Let a graph signal $\mathbf{f} \in \mathbb{R}^N$ have a spectrum 
$\hat{\mathbf{f}} = \mathbf{U}^\top \mathbf{f}$ 
under the graph Fourier basis 
$\mathbf{U} = [\mathbf{u}_1, \mathbf{u}_2, \ldots, \mathbf{u}_N]$, 
where 
$\mathbf{L} = \mathbf{U}\boldsymbol{\Lambda} \mathbf{U}^\top$ 
is the normalized graph Laplacian and 
$\boldsymbol{\Lambda} = \mathrm{diag}(\lambda_1, \lambda_2, \ldots, \lambda_N)$ 
with 
$0 \le \lambda_1 \le \cdots \le \lambda_N \le 2.$ 
Define the frequency response function of the message passing operator 
by $\rho: \lambda \mapsto \mathbb{R}.$ 
If the dynamic gating weights satisfy
\begin{equation}
g^{(h,e)}_i, \; g^{(h,s)}_i, \; g^{(h,d)}_i 
\;\propto\; 
\bigl|\lambda_i - 1\bigr| \;+\; \epsilon
\quad (\epsilon > 0),
\end{equation}
then there exist constants $\alpha > 0$ and $\kappa > 0$ such that 
the frequency response $\rho(\lambda_i)$ of the operator satisfies
\begin{equation}
\rho(\lambda_i) \;\ge\; \alpha\,\bigl|\lambda_i - 1\bigr|
\quad \text{and} \quad
\rho(\lambda_i) \;\ge\; \kappa\,\lambda_i,
\end{equation}
which means the operator is a strictly high-pass filter.
\end{theorem}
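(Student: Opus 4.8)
The plan is to carry the analysis into the graph-Fourier domain and reduce the two claimed inequalities to a single lower bound of the form $\rho(\lambda)\gtrsim |\lambda-1|+\epsilon$, after which both bounds are elementary consequences of $\lambda\in[0,2]$.

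\textbf{Step 1 (linearize and diagonalize one MSM block).} I would first replace the pointwise nonlinearities ($\operatorname{SiLU}$, $\sigma$, and the MLPs) by their first-order (Jacobian) surrogates around a reference state, absorbing their Lipschitz constants into the final constants, and write the edge-update-plus-attention-aggregation map as a node operator $T=c_0\mathbf{I}+G\,B(\mathbf{L})$: the residual connections ($+\mathbf{e}_i$ in the edge update and $+\mathbf{h}_k$ in the node update) contribute the $c_0\mathbf{I}$ term with $c_0>0$; summing the symmetric combination $\mathbf{h}_{s(i)}+\mathbf{h}_{d(i)}$ over incident edges contributes a $(2\mathbf{I}-\mathbf{L})$-type polynomial, the antisymmetric difference a multiple of $\mathbf{L}$, and the normalized attention weights $\alpha_i^{(h)}$ keep the aggregation row-stochastic, hence a fixed (locally frozen) polynomial $B(\mathbf{L})$; finally $G=\operatorname{diag}(g(\lambda_i))$ is the gate matrix. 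Conjugating by $\mathbf{U}^\top$, every $B(\mathbf{L})$ becomes diagonal with entries $B(\lambda_i)$, and by the standing assumption $g(\lambda_i)=c_g\,(|\lambda_i-1|+\epsilon)$, so $T$ is diagonalized with $\rho(\lambda_i)=c_0+c_g\,(|\lambda_i-1|+\epsilon)\,B(\lambda_i)$.

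\textbf{Step 2 (uniform positivity of the base response).} Next I would establish $B(\lambda)\ge b_\star>0$ on $[0,2]$: the edge-update residual $+\mathbf{e}_i$ forces a strictly positive constant into $B$, while the antisymmetric-difference and symmetric-sum contributions, which up to nonnegative coefficients are $\lambda$ and $2-\lambda$, only add nonnegative amounts on $[0,2]$; row-stochasticity of the attention aggregation prevents the low-pass $(2\mathbf{I}-\mathbf{L})$ stream from over-cancelling. Hence $\rho(\lambda)\ge c_g b_\star\,(|\lambda-1|+\epsilon)$ for every spectral value $\lambda\in\{\lambda_1,\dots,\lambda_N\}$.

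\textbf{Step 3 (elementary endgame) and the main obstacle.} From this single inequality both claims follow at once: discarding $\epsilon$ gives $\rho(\lambda)\ge\alpha|\lambda-1|$ with $\alpha=c_g b_\star$; and since on $[0,2]$ one has $|\lambda-1|+\epsilon\ge\epsilon\ge\tfrac{\epsilon}{2}\lambda$, one also gets $\rho(\lambda)\ge\kappa\lambda$ with $\kappa=\tfrac12\epsilon\,c_g b_\star$, so $\rho$ is bounded below by a V-shaped profile that grows toward the high end of the spectrum relative to any pure smoother, which is the asserted high-pass behaviour. The genuinely subtle part is Steps 1--2: giving a precise meaning to ``the frequency response of the message passing operator'' when the true MSM block is nonlinear and its gates depend on the features rather than literally on $\lambda_i$. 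I would handle this by committing to the first-order linearization and reading the hypothesis $g\propto|\lambda-1|+\epsilon$ as the induced action of the learned gates on each eigenmode, and by treating the attention coefficients as locally frozen so that $B(\mathbf{L})$ is an honest fixed polynomial in $\mathbf{L}$ sharing the eigenbasis $\mathbf{U}$; within that setup the only real work is the uniform bound $B(\lambda)\ge b_\star>0$, i.e.\ that the low-pass symmetric stream cannot annihilate the residual plus antisymmetric contributions, which is where I expect to spend most of the effort.
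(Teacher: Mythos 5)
Your proposal follows essentially the same route as the paper's own argument: pass to the graph Fourier domain, treat the MSM block as acting diagonally with a frequency response $\rho$, let the gating hypothesis $g \propto |\lambda - 1| + \epsilon$ transfer to a lower bound on $\rho$, and finish with elementary inequalities valid on $[0,2]$. The difference is one of explicitness rather than of substance. The paper's proof never carries out anything like your Steps 1--2: it posits a kernel representation of the gating, assumes outright that $\mathbf{G}\,\mathbf{u}_\ell = \rho(\lambda_\ell)\,\mathbf{u}_\ell$, and then (its Steps 3--4) asserts qualitatively that the gating design yields constants $\alpha$ and $\kappa$, with no derivation. You instead propose a concrete linearized decomposition $T = c_0\mathbf{I} + G\,B(\mathbf{L})$, isolate the one lemma actually needed (uniform positivity $B(\lambda)\ge b_\star>0$ on $[0,2]$), and do the endgame explicitly, obtaining $\alpha = c_g b_\star$ and $\kappa = \tfrac{\epsilon}{2}c_g b_\star$ from $|\lambda-1|+\epsilon \ge \epsilon \ge \tfrac{\epsilon}{2}\lambda$; that computation, which the paper omits, also makes transparent how weak the stated conclusion is, since any uniformly positive response satisfies both bounds because $|\lambda-1|\le 1$ and $\lambda \le 2$. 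The caveats you flag honestly --- that the block is nonlinear with feature-dependent gates and softmax attention, so a scalar frequency response only exists after linearizing and freezing the attention into something sharing the eigenbasis of $\mathbf{L}$, and that $B(\lambda)\ge b_\star$ remains to be proved --- are precisely the idealizations the paper makes silently, so your plan is, if anything, more complete than the published proof; the only place where real additional work would be required relative to it is your Step 2, the non-cancellation of the low-pass stream under row-stochastic attention, which the paper does not address at all.
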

\begin{proof}
We consider a multi-stream message passing operator that we denote by 
$\mathbf{G}$. 
This operator depends on both node features and dynamic gating on edges. 
We let $\mathbf{f} \in \mathbb{R}^N$ be an arbitrary graph signal, and we 
write its graph Fourier transform as 
$\hat{\mathbf{f}} = \mathbf{U}^\top \mathbf{f}$, 
where 
$\mathbf{L} = \mathbf{U} \boldsymbol{\Lambda} \mathbf{U}^\top$ 
is the normalized Laplacian and 
$\boldsymbol{\Lambda} = \mathrm{diag}(\lambda_1, \ldots, \lambda_N)$ 
with 
$0 \le \lambda_1 \le \cdots \le \lambda_N \le 2$.
\textbf{Step 1: Integral analogy of gating on a discrete graph.}  
We first recall that on a continuous domain, a gating operator often admits a 
representation of the form
\[
(\mathbf{G}\mathbf{f})(x) 
\;=\; \int \mathcal{K}(x,\xi)\,\mathbf{f}(\xi)\,d\xi,
\]
where $\mathcal{K}(x,\xi)$ is a kernel that encodes the gating weights. 
On a discrete graph, the integral turns into a finite sum. 
Hence for node $i$, we write
\[
(\mathbf{G}\mathbf{f})_i 
\;=\; \sum_{j\in \mathcal{N}_i} \mathcal{K}_{ij}\,f_j,
\]
where $\mathcal{N}_i$ denotes the neighbors of node $i$, and 
$\mathcal{K}_{ij}$ depends on the dynamic gating parameters 
$\bigl(g^{(h,e)}_i, g^{(h,s)}_i, g^{(h,d)}_i\bigr)$. 
We assume these gating parameters scale proportionally to 
$\bigl|\lambda - 1\bigr| + \epsilon$, 
which implies larger weights when $\lambda$ is around high or mid-frequency 
regions.

\textbf{Step 2: Spectral decomposition of the operator.}  
We decompose $\mathbf{f}$ in the eigenbasis of $\mathbf{L}$:
\[
\mathbf{f} 
\;=\; \sum_{\ell=1}^N \hat{f}_\ell \,\mathbf{u}_\ell, 
\quad 
f_j 
\;=\; \sum_{\ell=1}^N \hat{f}_\ell\,(\mathbf{u}_\ell)_j.
\]
The operator $\mathbf{G}$ acts on $\mathbf{u}_\ell$ with some gain factor 
$\rho(\lambda_\ell)$, which we call the frequency response. 
In other words, we write
\[
\mathbf{G}\,\mathbf{u}_\ell 
\;=\; \rho(\lambda_\ell)\,\mathbf{u}_\ell.
\]
Thus the value of $\rho(\lambda_\ell)$ reveals how the operator scales 
the amplitude of the $\ell$-th eigenmode.

\textbf{Step 3: High-pass filtering behavior from gating design.}  
We now analyze the effect of gating weights 
$g^{(h,e)}_i,\, g^{(h,s)}_i,\, g^{(h,d)}_i$ 
that satisfy 
\[
g^{(h,e)}_i,\, g^{(h,s)}_i,\, g^{(h,d)}_i 
\;\propto\;
|\lambda_i - 1| + \epsilon.
\]
Because $\lambda_i \in [0,2]$, when $\lambda_i$ is close to $2$, it 
represents a high-frequency component on the graph. 
In that regime, the gating weights become larger, and the message passing 
operator $\mathbf{G}$ amplifies those components. 
Similarly, when $\lambda_i$ is near $1$, the factor $|\lambda_i - 1|$ 
can still be significant enough to enhance mid-to-high frequencies. 
Conversely, for $\lambda_i$ near $0$ (low frequency), 
the gating is relatively small and thus tends to suppress those components.

\textbf{Step 4: Combining inequalities to show strictly high-pass.}  
We combine partial inequalities for different ranges of $\lambda_i$. 
Since $0 \le \lambda_i \le 2$, 
we use the gating assumption to show there are positive constants 
$\alpha$ and $\kappa$ such that
\[
\rho(\lambda_i) 
\;\ge\; 
\alpha \,\bigl|\lambda_i - 1\bigr|
\quad\text{and}\quad
\rho(\lambda_i) 
\;\ge\; 
\kappa \,\lambda_i.
\]
Hence the operator $\mathbf{G}$ behaves like a high-pass filter, because 
it provides higher gain to higher (or mid-high) frequency components and 
less gain to low-frequency components. 
Therefore, $\mathbf{G}$ is a strictly high-pass operator in the graph spectral 
domain.

This completes the proof of Theorem~\ref{thm:bias}.
\end{proof}

\section{Related Work}
\subsection{Deep Learning based Weather Forecasting}
\textbf{Global Weather Forecasting.} Global weather forecasting has seen significant progress with deep learning models. FourCastNet, based on Fourier neural operators, provides global forecasts comparable to traditional numerical methods like IFS, but at much higher speeds~\cite{pathak2022fourcastnet}. Pangu, utilizing the Swin Transformer, exceeds NWP methods, incorporating earth-specific location embeddings for better performance~\cite{bi2023accurate}. The Spherical Fourier Neural Operator (SFNO) extends Fourier methods using spherical harmonics, offering more stable long-term predictions~\cite{bonev2023spherical}. FuXi focuses on long-term forecasting, achieving a 15-day forecasts comparable to ECMWF~\cite{chen2023fuxi}. GraphCast leverages message-passing networks to improve efficiency and forecasting accuracy~\cite{lam2023learning}, and GenCast builds on this to enhance ensemble forecasting~\cite{price2023gencast}. Further, diffusion models like those in~\cite{li2024generative} generate probabilistic ensembles by sampling, while NeuralGCM~\cite{kochkov2024neural} focuses on atmospheric circulation with a dynamic core, offering climate simulation capabilities but at higher training and inference costs. 

\textbf{Regional Weather Forecasting.} The goal of regional weather forecasting is to enhance local prediction accuracy with high-resolution models. CorrDiff~\cite{mardani2023generative} combines U-Net and diffusion models to improve local forecasts. MetaWeather~\cite{kim2024metaweather} adapts global forecasts to regional contexts using meta-learning. GNNs are also widely applied in regional forecasting, with Graphcast~\cite{lam2023learning} enhancing accuracy by modeling complex spatial dependencies. MetNet-3~\cite{espeholt2022deep} offers high-accuracy forecasts for weather variables, such as precipitation, temperature, and wind speed, at 2-minute intervals and 1–4 km resolution, outperforming traditional models like HRRR. NowcastNet~\cite{zhang2023skilful} and DGMR~\cite{ravuri2021skilful} excel in short-term extreme precipitation forecasts using deep generative models and radar data. In spatiotemporal prediction, NMO~\cite{wu2024neural} models the evolution of physical dynamics, providing new insights for local weather forecasting. Similarly, SimVP~\cite{gao2022simvp} and PastNet~\cite{wu2024pastnet} achieve good results in forecasting local precipitation evolution using spatiotemporal convolution methods.
    
    
\subsection{Numerical analysis methods}
Multigrid methods~\cite{mccormick1987multigrid,wesseling1995introduction,hackbusch2013multi,bramble2019multigrid,hiptmair1998multigrid,brandt1983multigrid,borzi2009multigrid} and nested grid strategies~\cite{miyakoda1977one,zhang2012nested,sullivan1996grid} are widely used to solve PDEs and handle multi-scale problems~\cite{debreu2008two,xue2000advanced}. Multigrid methods use grids of different resolutions to transfer information and accelerate iterations. They efficiently solve large-scale problems and improve computational accuracy. By eliminating low-frequency errors on coarse grids and high-frequency errors on fine grids, multigrid methods effectively handle error convergence at different scales~\cite{he2019mgnet,he2023mgno,shao2022fast}. Nested grid strategies embed higher-resolution fine grids into regions of interest based on a global coarse grid to capture local complex physical phenomena in detail. In weather forecasting, this method provides large-scale background fields on a global scale while refining the grid for target regions to accurately simulate the evolution of local weather systems and the occurrence of extreme events~\cite{bacon2000dynamically}. 


\section{Data Details}
\label{appendix:data}
\subsection{Dataset}
    In this section, we are going to introduce the dataset we used in this study detailedly. For the global forecasting, we conduct experiments on the WeatherBench2~\cite{rasp2024weatherbench} benchmark, a subset of ERA5 reanalysis data~\cite{hersbach2020era5}. The WeatherBench2 benchmark we used is the version of 1.5° resolution (121 × 240), which spans from 1959 to 2020. This subset contains 5 variables (Z, Q, T, U, V) with 13 pressure levels (50 hPa, 100 hPa, 150 hPa, 200 hPa, 250 hPa, 300 hPa, 400 hPa, 500 hPa, 600 hPa, 700 hPa, 850 hPa, 925 hPa and 1,000 hPa) and 4 variables (U10M, V10M, T2M, MSLP) with surface level. For the regional forecasting, a higher resolution data (0.25° resolution) of ERA5 is also used, which can be downloaded from \url{https://cds.climate.copernicus.eu/}, the official website of Climate Data Store (CDS). All the data we used are shown in Table \ref{tab:appendix_data}. For both global and regional forecasts, we use the data from 1959 to 2017 for training, 2018 to 2019 for validating, and 2020 for testing.

    \begin{table*}[ht]
    \caption{The data details in this work.}
    \label{tab:appendix_data}
    \vspace{-5pt}
    \vskip 0.13in
    \centering
    \begin{small}
        \begin{sc}
            \renewcommand{\multirowsetup}{\centering}
            \setlength{\tabcolsep}{2pt} 
           \begin{tabular}{l|cccccc}
           \toprule
        Task                    & \begin{tabular}[c]{@{}c@{}}Variable\\ Name\end{tabular} & Layers & \begin{tabular}[c]{@{}c@{}}Spatial\\ Resolution\end{tabular} & Dt& \begin{tabular}[c]{@{}c@{}}Lat-Lon\\ Range\end{tabular} & Time           \\ \midrule
        \multirow{9}{*}{Global} & Geopotential (Z)                                        & 13     & 1.5°                                                         & 6h & -90°S-180°W$\sim$90°N180°E                              & 1959$\sim$2020 \\
                        & Specific Humidity (Q)                                   & 13     & 1.5°                                                         & 6h & -90°S-180°W$\sim$90°N180°E                              & 1959$\sim$2020 \\
                        & Temperature (T)                                         & 13     & 1.5°                                                         & 6h & -90°S-180°W$\sim$90°N180°E                              & 1959$\sim$2020 \\
                        & U Component of Wind (U)                                 & 13     & 1.5°                                                         & 6h & -90°S-180°W$\sim$90°N180°E                              & 1959$\sim$2020 \\
                        & V Component of Wind (V)                                 & 13     & 1.5°                                                         & 6h & -90°S-180°W$\sim$90°N180°E                              & 1959$\sim$2020 \\
                        & 10 Metre U Wind Component (U10M)                         & 1      & 1.5°                                                         & 6h & -90°S-180°W$\sim$90°N180°E                              & 1959$\sim$2020 \\
                        & 10 Metre V Wind Component (V10M)                         & 1      & 1.5°                                                         & 6h & -90°S-180°W$\sim$90°N180°E                              & 1959$\sim$2020 \\
                        & 2 Metre Temperature (T2M)                                & 1      & 1.5°                                                         & 6h & -90°S-180°W$\sim$90°N180°E                              & 1959$\sim$2020 \\
                        & Mean Sea Level Pressure (MSLP)                          & 1      & 1.5°                                                         & 6h & -90°S-180°W$\sim$90°N180°E                              & 1959$\sim$2020 \\  \midrule
        \multirow{2}{*}{Regional}                & Mean Sea Level Pressure (MSLP)                          & 1      & 0.25°                                                        & 6h & 7.5°W114°E$\sim$ 36°W172.5°E                                                       & 1959$\sim$2020 \\
        & 10 Metre U Wind Component (U10M)                          & 1      & 0.25°                                                        & 6h & 7.5°W114°E$\sim$ 36°W172.5°E                                                        & 1959$\sim$2020 \\
        \bottomrule
        \end{tabular}
        \end{sc}
	\end{small}
    \label{tab:data}
    \end{table*}

    \begin{table*}[ht]
    \caption{The Params and MACs comparsion of different models.}
    \vspace{-5pt}
    \vskip 0.13in
    \centering
    \begin{small}
        \begin{sc}
            \renewcommand{\multirowsetup}{\centering}
            \setlength{\tabcolsep}{20pt} 
           \begin{tabular}{l|cc}
            \toprule
            Model       & Params (M) & MACs (G) \\ \midrule 
            Pangu~\cite{bi2023accurate}       & 23.83      & 142.39   \\ 
            Fengwu~\cite{chen2023fengwu}      & 153.49     & 132.83   \\ 
            Graphcast~\cite{lam2023learning}   & 28.95      & 1639.26  \\ 
            Fuxi~\cite{chen2023fuxi}        & 128.79     & 100.96   \\ \midrule
            OneForecast & 24.76      & 509.27   \\ \bottomrule
            \end{tabular}
        \end{sc}
	\end{small}
    \label{tab:params}
    \end{table*}
\subsection{Data preprocessing}
    Different atmosphere and ocean variables have large variations in their magnitude. To allow the model focusing on predictions rather than learning the differences between variables, we normalized the data before putting the data into the model. We calculated the mean and standard deviation of all variables using data from 1959 to 2017 (training set). Each variable has a corresponding mean and standard deviation. Before feeding the data into the model, we first subtract the respective mean and divided it by the standard deviation.
    
\section{Algorithm}
    We summarize the overall framework of OneForecast in Algorithm \ref{alg:OneForecast_global}.


    \begin{algorithm}[ht]
        \caption{OneForecast Framework for Global Weather Forecasting}
        \label{alg:OneForecast_global}
            \begin{algorithmic}[1]
                \renewcommand{\algorithmicrequire}{\textbf{Require:}}
        	\REQUIRE
        	Initial atmospheric condition $Z_t$.
        	\ENSURE
        	Next step atmospheric state $Z_{t+1}$.
                \STATE  Initialize OneForecast
                \REPEAT
                \STATE \textbf{Encoder}
                \STATE  Embedding features of grid nodes $Z_t$, mesh nodes $\mathcal{V}$, mesh edges $\mathcal{E}$, grid to mesh edges $\mathcal{E}^{\mathrm{G} 2 \mathrm{M}}$, and mesh to grid edges $\mathcal{E}^{\mathrm{M} 2 \mathrm{G}}$ into latent space using respective MLP: ($\mathcal{V}_f^G$, $h$, $\mathcal{E}_f$, $\mathcal{E}^{\mathrm{G} 2 \mathrm{M}}_f$, $\mathcal{E}^{\mathrm{M} 2 \mathrm{G}}_f$) = MLPs($Z_t$, $\mathcal{V}$, $\mathcal{E}$, $\mathcal{E}^{\mathrm{G} 2 \mathrm{M}}$, $\mathcal{E}^{\mathrm{M} 2 \mathrm{G}}$)
                \STATE Project the atmospheric state from the lat-lon grid into the mesh nodes: ${\mathcal{E}^{\mathrm{G} 2 \mathrm{M}}_f}^{\prime} = \mathrm{ESMLP}(\mathcal{V}_f^G, h, \mathcal{E}^{\mathrm{G} 2 \mathrm{M}}_f)$,

                $h^{\prime} = \mathrm{MLP}_{e1}(h, \sum{\mathcal{E}^{\mathrm{G}2\mathrm{M}}_f}^{\prime})$
                \STATE Update grid node feature: ${\mathcal{V}_f^G}^{\prime} = \mathrm{MLP}_{e2}(\mathcal{V}_f^G)$
                \STATE Apply residual connection to update the feature of grid to mesh edge, mesh node, and grid node again: ${\mathcal{E}^{\mathrm{G} 2 \mathrm{M}}_f} = {\mathcal{E}^{\mathrm{G} 2 \mathrm{M}}_f}^{\prime} + {\mathcal{E}^{\mathrm{G} 2 \mathrm{M}}_f}$, $h^{\prime} = h^{\prime} + h$, ${\mathcal{V}_f^G} = {\mathcal{V}_f^G}^{\prime} + {\mathcal{V}_f^G}$
                \STATE \textbf{Multi-stream Messaging}
                \STATE Apply dynamic multi-head gated edge update module (DMG) to update edge feature: $\mathcal{E}_f^{\prime} = DMG(\mathcal{E}_f, h_s, h_r)$
                \STATE Apply multi-head node attention mechanism (MHA) to update mesh node feature: $h^{\prime} = MHA(h, \sum{\mathcal{E}_f}^{\prime})$
                \STATE Apply residual connection to update the feature of edge and mesh node: $\mathcal{E}_f = \mathcal{E}_f^{\prime} + \mathcal{E}_f$, $h = h^{\prime} + h$
                \STATE \textbf{Decoder} Project the feature from mesh back to lat-lon grid: ${\mathcal{E}^{\mathrm{M} 2 \mathrm{G}}_f}^{\prime} = \mathrm{ESMLP}(\mathcal{V}_f^G, \mathcal{E}_f, \mathcal{E}^{\mathrm{M} 2 \mathrm{G}}_f)$, ${\mathcal{V}_f^G}^{\prime} = {\mathrm{MLP}_{d1}}(\mathcal{V}_f^G, \sum{\mathcal{E}^{\mathrm{M} 2 \mathrm{G}}_f}^{\prime})$, $\mathcal{V}_f^G=\mathcal{V}_f^G+\mathcal{V}_f^{G^{\prime}}$, $Z_{t+1} = \mathrm{MLP_{d2}}({\mathcal{V}_f^G})$
                
                \UNTIL converged
                \STATE \textbf{return} $OneForecast$
            \end{algorithmic}
        \end{algorithm}

\section{Model Details for Global Forecasts}
\label{Appendix:model_details}

\subsection{ Earth-specific Region Refined Multi-scale Graph}
    The graph used in OneForecast can be defined as: $\mathcal{G}(\mathcal{V}^G,\mathcal{V}, \mathcal{E}, \mathcal{E}^{\mathrm{G} 2 \mathrm{M}}$, $\mathcal{E}^{\mathrm{M} 2 \mathrm{G}})$.

\textbf{Grid Nodes.}  $\mathcal{V}^G$ is the ensemble of grid nodes, which contains $120\times240=28800$ nodes for 1.5° global data in global forecast task. And each node consists of 69 atmospheric features (5 variables at 13 pressure levels and 4 variables at surface level, $5\times13+4=69$). Since we just consider 1 step historical state, the input features of OneForecast are 69. For regional forecast, the region size can be arbitrary within the permission of GPU memory. For simplicity, we choose the region size of $120\times240$ from 0.25° data, the node is still $120\times240=28800$.

\textbf{Mesh Nodes.}  $\mathcal{V}$ is the ensemble of mesh nodes, which contains multi-scale mesh nodes of different fineness and region refined mesh nodes that cover the global area. The mesh nodes are distributed over a refined icosahedron that has undergone five levels of subdivision, and the coarsest icosahedron consists of 12 vertices and 20 triangular faces. By dividing each triangular face into four smaller triangles, an additional node is generated at the midpoint of each edge. The new nodes are then projected back onto the unit sphere, gradually refining the grid. To enhance the forecasting performance in key regions, we further refine specific areas of the finest mesh, achieving localized mesh densification. For the global forecast task, we refine the 2 areas: 0°N105°E$\sim$30°N160°E and 10°N-95°W$\sim$30°N-35°W. The features of each node include the cosine value of the latitude, as well as the sine and cosine values of the longitude. We only keep the finest mesh nodes, since the nodes on the coarse mesh are its subset. In total, the graph structure of OneForecast comprises 12337 mesh nodes, each characterized by three features. 

\textbf{Mesh Edges.} $\mathcal{E}$ are the bidirectional edges that connect mesh nodes (sender and receiver nodes). Similar to mesh nodes, there are corresponding edges for each scale of mesh, and $\mathcal{E}$ is the ensemble of multi-scale edges. And the features of each edge include the length of edge, the 3D position difference between sender and receiver nodes. In total, OneForecast comprises 98296 mesh edges, each characterized by four features.

\textbf{Grid2Mesh Edges.} $\mathcal{E}^{\mathrm{G} 2 \mathrm{M}}$ are the the unidirectional edges that used in the encoder, which connect grid and mesh nodes. To ensure that each grid node has a corresponding mesh node connected to it, we add $\mathcal{E}^{\mathrm{G} 2 \mathrm{M}}$ to grid nodes and mesh nodes if the distance between them is less than or equal to 0.6 times the edge length of the finest $\mathcal{E}$. Similar to mesh edge $\mathcal{E}$, each grid2mesh edge comprises 4 features, and OneForecast has 49233 grid2mesh edges in total.

\textbf{Mesh2Grid Edges.} $\mathcal{E}^{\mathrm{M} 2 \mathrm{G}}$ are the unidirectional edges that used in the decoder, which connect grid and mesh nodes. For each grid node, we find the triangle face containing it on the finest mesh and connect 3 mesh nodes to it. Similar to other edges, each mesh2grid edge has 4 features. In total, OneForecast has 86,400 mesh2grid edges.

\subsection{Encoder}
     This paper uses 2 types MLP. We denote the first type as MLP(·), the number of layer is 1, the latent dim is 512, and followed by the SiLU activation function and Layernorm function. And we denote the second type as ESMLP(·), the other hyperparameters are the same as MLP(·), except ESMLP(·) transforms three features (edge features, node features of the corresponding source and destination node) individually through separate linear transformations and then sums them for each edge accordingly.
    We first apply embedder MLP to map the data to the latent space, which can be defined as:
    \begin{equation}
        MLP = LN(SiLU(Linear(x_{embedder}))),
    \end{equation}
    \begin{equation}
        (\mathcal{V}_f^G, h, \mathcal{E}_f, \mathcal{E}^{\mathrm{G} 2 \mathrm{M}}_f, \mathcal{E}^{\mathrm{M} 2 \mathrm{G}}_f) = MLPs(Z_t, \mathcal{V}, \mathcal{E}, \mathcal{E}^{\mathrm{G} 2 \mathrm{M}}, \mathcal{E}^{\mathrm{M} 2 \mathrm{G}}),
    \end{equation}
    where, $x_{embedder}$ is the input of embedder MLP. For the linear function, we set the latent dim to 512. $SiLU(\cdot)$ is the SiLU activation function, $LN(\cdot)$ is the layernorm function. $Z_t$, $\mathcal{V}$, $\mathcal{E}$, $\mathcal{E}^{\mathrm{G} 2 \mathrm{M}}$, and $\mathcal{E}^{\mathrm{M} 2 \mathrm{G}}$ are embedded features of grid nodes, mesh nodes, mesh edges, grid to mesh edges, and mesh to grid edges. We then project the atmospheric state from the lat-lon grid into the mesh nodes. Specifically, we first update the edge features through an Edge Sum MLP (ESMLP): 
    \begin{equation}
        \mathcal{E}_f^{\mathrm{G} 2 \mathrm{M} \prime}=\mathbf{W}_e \mathcal{E}_f^{\mathrm{G} 2 \mathrm{M}},
    \end{equation}
    \begin{equation}
        {h_s}^{\prime}=\mathbf{W}_sh_s,
    \end{equation}
    \begin{equation}
        h_d^{\prime}=\mathbf{W}_d h_d+\mathbf{b}_d,
    \end{equation}
    \begin{equation}
        \mathbf{h}_{\mathrm{sum}}=\mathcal{E}_f^{\mathrm{G} 2 \mathrm{M} \prime}+h_s^{\prime}+h_d^{\prime},
    \end{equation}
    \begin{equation}
        {\mathcal{E}^{\mathrm{G} 2 \mathrm{M}}_f}^{\prime}=\operatorname{LN}\left(\mathbf{W}_{\mathrm{ESMLP}} \sigma\left(h_{sum}\right)+\mathbf{b}_{\mathrm{ESMLP}}\right),
    \end{equation}
    where, $\mathbf{W}_e$, $\mathbf{W}_s$, $\mathbf{W}_d$ are the linear transformation matrix of grid2mesh edge features, send node feature, and target node features. $\mathbf{W}_{\text {ESMLP }}$ is the linear transformation matrix of output layer. $b_d$ is the bias of mesh node during linear transformation, $b_{ESMLP}$ is the bias vector of ESMLP. In summary, the grid2mesh edge update process can be define as:
    \begin{equation}
         {\mathcal{E}^{\mathrm{G} 2 \mathrm{M}}_f}^{\prime} = \mathrm{ESMLP}(\mathcal{E}^{\mathrm{G} 2 \mathrm{M}}_f, h_s, h_d).
    \end{equation}
    After updating the grid2node features, we update the mesh node features using another MLP:
    \begin{equation}
        h^{\prime} = \mathrm{MLP_{e1}}(h, \sum{\mathcal{E}^{\mathrm{G}2\mathrm{M}}_f}^{\prime}),
    \end{equation}
    where, $\sum \mathcal{E}_f^{\mathrm{G} 2 \mathrm{M}^{\prime}}$ are the edges that arrives at mesh node.
    Then, we update the grid node features using another MLP:
    \begin{equation}
\mathcal{V}_f^{G^{\prime}}=\operatorname{MLP}_{e2}\left(\mathcal{V}_f^G\right).
    \end{equation}
    Finally, residual connections are applied to update the feature of grid to mesh edge, mesh node, and grid node again.

\subsection{Multi-stream Messaging}
    The proposed multi-stream messaging is implemented by an adaptive messaging mechanism, which contains a dynamic multi-head gated edge update module and a multi-head node attention module. This part has been introduced in detail in the main text, so we only added the hyperparameter settings here. For the dynamic multi-head gated edge update module, the dimensions of the gating vector are set to 64. In the multi-head node attention module, the $MLP_a$ used to calculate the attention score consists of a linear layer, a SiLU activation function, a linear layer, and a Sigmoid function. The hidden dimension of the linear layer is 64.
\subsection{Decoder}
    In the decoder, we map the feature from mesh back to lat-lon grids, similar to encoder, we first update the mesh2grid features:
    \begin{equation}
        \mathcal{E}_f^{\mathrm{M} 2 \mathrm{G}^{\prime}}=\operatorname{ESMLP}\left(\mathcal{V}_f^G, h, \mathcal{E}_f^{\mathrm{M} 2 \mathrm{G}}\right).
    \end{equation}
    Then, we update the grid node features:
    \begin{equation}
        {\mathcal{V}_f^G}^{\prime} = {\mathrm{MLP}}_{d1}(\mathcal{V}_f^G, \sum{\mathcal{E}^{\mathrm{M} 2 \mathrm{G}}_f}^{\prime}),
    \end{equation}
    where, $\sum \mathcal{E}_f^{\mathrm{M} 2 \mathrm{G}^{\prime}}$ are the edges that arrives at grid node.
    After that, a residual connection is applied to update the grid node features again:
    \begin{equation}
    \mathcal{V}_f^{G}=\mathcal{V}_f^G+\mathcal{V}_f^{G^{\prime}}.
    \end{equation}
    Finally, we apply a MLP to predict the next step results:
    \begin{equation}
    Z_{t+1}=\operatorname{MLP}_{d2}\left(\mathcal{V}_f^{G}\right).
    \end{equation}

\section{Experiments Details}
\subsection{Evaluation Metric}
\label{appendix_metrics}
    We utilize four metrics, RMSE (Root Mean Square Error) and ACC (Anomalous Correlation Coefficient), CSI (Critical Success Index), and SEDI (Symmetric Extremal Dependence Index) to evaluate the forecasting performance, which can be defined as:
    \begin{equation}\small
    {RMSE}(k, t) = \sqrt{\frac{\sum\limits_{i=1}^{N_{\text{lat}}} \sum\limits_{j=1}^{N_{\text{lon}}} L(i) \left( \hat{\mathbf{A}}_{ij,t}^k - \mathbf{A}_{ij,t}^k \right)^2}{N_{\text{lat}} \times N_{\text{lon}}}},
\end{equation}
    \begin{equation}\small
    \operatorname{ACC}(k, t) = \frac{\sum\limits_{i=1}^{N_{\text{lat}}} \sum\limits_{j=1}^{N_{\text{lon}}} L(i) \hat{\mathbf{A'}}_{ij,t}^k \mathbf{A'}_{ij,t}^k}{\sqrt{\sum\limits_{i=1}^{N_{\text{lat}}} \sum\limits_{j=1}^{N_{\text{lon}}} L(i) \left( \hat{\mathbf{A'}}_{ij,t}^k \right)^2 \times \sum\limits_{i=1}^{N_{\text{lat}}} \sum\limits_{j=1}^{N_{\text{lon}}} L(i) \left( \mathbf{A'}_{ij,t}^k \right)^2}},
\end{equation}
    where $\mathbf{A}_{i, j, t}^v$ represents the value of variable v at horizontal coordinate $(i, j)$ and time t. Latitude-dependent weights are defined as $L(i)=N_{\text {lat }} \times \frac{\cos \phi_i}{\sum_{i’=1}^{N_{\text {lat}}} \cos \phi_{i’}}$, where $\phi_i$ is the latitude at index i. The anomaly of $A$, denoted as $A'$, is computed as the deviation from its climatology, which corresponds to the long-term mean of the meteorological state estimated from 59 years of training data. To evaluate model performance, RMSE and ACC are averaged across all time steps and spatial coordinates, providing summary statistics for variable $k$ at a given lead time $\Delta t$.

\begin{equation}
\operatorname{CSI}(k, t)=\frac{\mathrm{TP}}{\mathrm{TP}+\mathrm{FP}+\mathrm{FN}},
\end{equation}
\begin{equation}
\operatorname{SEDI}(k, t)=\frac{\log (F)-\log (H)-\log (1-F)+\log (1-H)}{\log (F)+\log (H)+\log (1-F)+\log (1-H)},
\end{equation}
where, true positives (TP) indicate the number of cases in which the state is accurately simulated. False positives (FP) and false negatives (FN) are defined in a similar manner. The false alarm rate is denoted as $F = \frac{\mathrm{FP}}{\mathrm{FP} + \mathrm{TP}}$, while the hit rate is represented as $H = \frac{\mathrm{TP}}{\mathrm{TP} + \mathrm{FN}}$.

\subsection{Model Training}
For the first type comparison, we train baseline models and OneForecast using the same training framework. We set the total model training epochs to 200, the initial learning rate is 1e-3, and use the cosine annealing scheduler to adjust the learning rate until the model converged. The model codes of Pangu and Graphcast we used are released by NVIDIA modulus (\url{https://github.com/NVIDIA/modulus}). The model code of Fuxi is obtained by sending an email to the author. For all models, we select the checkpoint that performed best on the validation set for comparative analysis.
\subsection{Typhoon Tracking}   
To track the eye of a tropical cyclone, we follow \cite{bi2023accurate,magnusson2021tropical} to find the local minimum of mean sea level pressure (MSLP). The time step of forecast lead time is set to be 6 hours. Specifically, once the initial position of the cyclone eye is provided, we iteratively search for a local minimum of MSLP that meets the following criteria:\par
- There is a maximum 850hPa relative vorticity greater than \(5 \times 10^{-5}\) within a 278km radius (in the Northern Hemisphere).\par
- There is a maximum thickness between 850hPa and 200hPa within a 278km radius when the cyclone is extratropical.\par
- The maximum 10m wind speed exceeds 8m/s within a 278km radius when the cyclone is over land.\par
Once the cyclone eye is identified, the tracking algorithm continues to find the next position within a 445km vicinity.\par
This study focuses on two extreme cyclones: Tropical Storm Yagi and Severe Typhoon Molave. The Yagi formed near Iwo Jima, Japan on August 6, 2018, and landed over Wenling, China on August 12. The Molave formed on October 11, 2020, and landed over the Philippines on October 25, 2020. The initial conditions for these two cyclones are set at 0:00 UTC, August 6, 2018 and 0:00 UTC, 11 October 2020, respectively. Since there is no Fuxi results in 2018 on WeatherBench2, we can not compare it with Yagi. The results of Best Track~\cite{ying2014overview}~\cite{lu2021western} can be found in \url{https://tcdata.typhoon.org.cn/en/}.
\label{appendix:typhoon}

\section{Additional Results}

\subsection{Efficiency Analysis}
As shown in Table \ref{tab:params}, our OneForecast has a competitive performance for Parameters and MACs. For the MACs, the size of input tensor is set to (1, 69, 120, 240). Not that for the ML-based weather forecasts, the computational cost is less important compared with the forecasting accuracy because the ML-basd model is several orders of magnitude faster (maybe tens of thousands of times) than traditional numerical methods. For instance, in numerical forecasting, a single simulation for a 10-day forecasts can take hours of computation in a supercomputer that has hundreds of nodes. In contrast, ML-based weather forecasting models just need a few seconds or minutes to produce 10-day forecasts using only 1 GPU.

\subsection{Spectral Analysis}
As shown in Figure \ref{fig_spectral_analysis}, we compute the surface kinetic energy spectrum and Q700 spectrum for baseline models using WeatherBench2's official results (averaged across the first 700 ICs). Our OneForecast model achieves comparable performance in this standardized evaluation framework. Notably, as Q700 data for Fuxi were not available in the WeatherBench2, only its surface kinetic energy spectrum could be analyzed.

\begin{figure*}[h]
\centering
\includegraphics[width=1\linewidth]{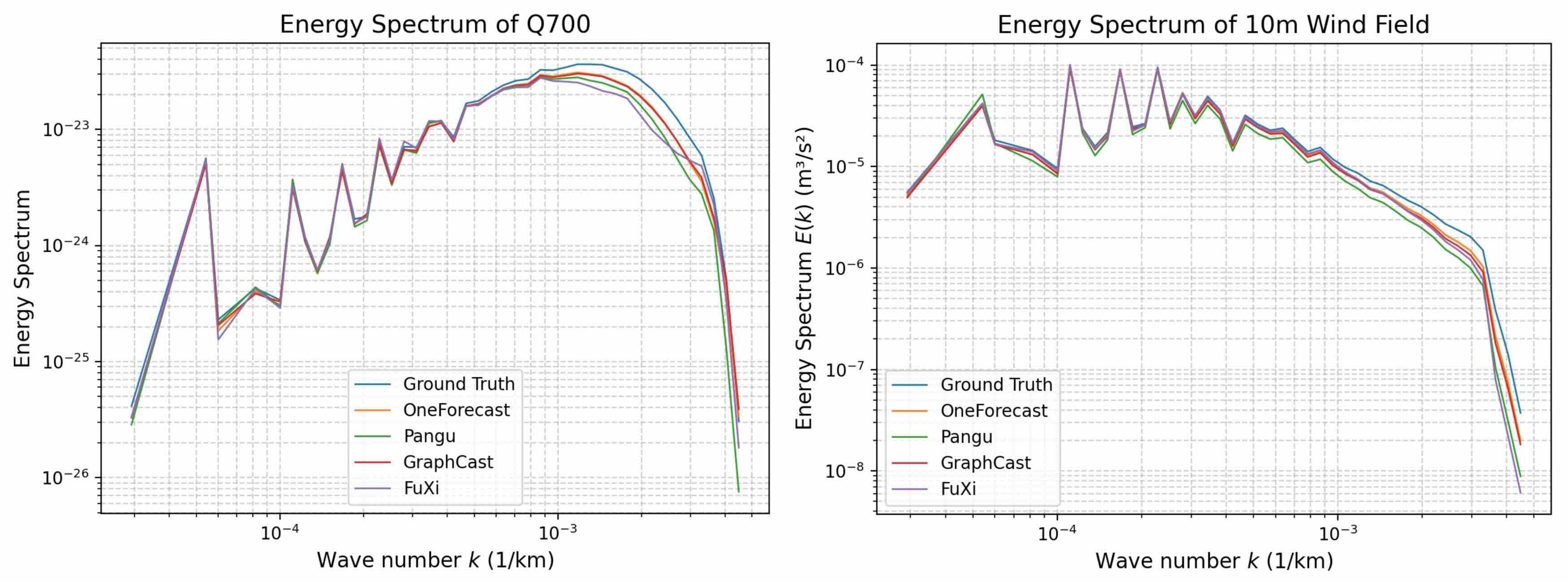}
\vspace{-20pt}
\caption{Spectral analysis of different models.}
\label{fig_spectral_analysis}
\end{figure*}

\subsection{Quantitative analysis of Extreme Event}
To assess the forecast performance of more extreme events, as shown in Table \ref{tab:csi_sedi}, we present 2 extreme event assessment indicators (the higer the better) CSI and SEDI. And we also add a quantitative metric for typhoon prediction in Table \ref{tab:metric_typhoon}, a lower value represents better results. It can be seen that our OneForecast also achieves satisfactory results in quantitative analysis.

\begin{table*}[ht]
    \caption{Quantitative analysis of extreme event for different models.}
    \label{tab:csi_sedi}
    \vspace{-5pt}
    \vskip 0.13in
    \centering
    \begin{small}
        \begin{sc}
            \renewcommand{\multirowsetup}{\centering}
            \setlength{\tabcolsep}{20pt} 
           \begin{tabular}{l|cccc}
            \toprule
            \multirow{2}{*}{Model} & Wind10M & Wind10M & T2M  & T2M  \\ \cmidrule{2-5} 
                                   & CSI     & SEDI    & CSI  & SEDI \\ \midrule
            Pangu                  & 0.11    & 0.29    & 0.16 & 0.34 \\
            Graphcast              & 0.13    & 0.29    & 0.20 & 0.38 \\
            Fuxi                   & 0.11    & 0.20    & 0.19 & 0.27 \\ \midrule
            Ours                   & 0.14    & 0.31    & 0.21 & 0.40 \\ \bottomrule
            \end{tabular}
        \end{sc}
	\end{small}
    \end{table*}

\begin{table*}[ht]
    \caption{Quantitative analysis of typhoon for different models.}
    \label{tab:metric_typhoon}
    \vspace{-5pt}
    \vskip 0.13in
    \centering
    \begin{small}
        \begin{sc}
            \renewcommand{\multirowsetup}{\centering}
            \setlength{\tabcolsep}{20pt} 
           \begin{tabular}{l|c}
            \toprule
            Model          & Track Position Error(km) \\ \midrule
            FS-HRES        & 332                      \\
            Pangu 1.5°     & 222                      \\
            Graphcast 1.5° & 212                      \\
            Pangu          & 231                      \\
            Graphcast      & 197                      \\ \midrule
            Ours           & 157                      \\ \bottomrule
            \end{tabular}
        \end{sc}
	\end{small}
    \end{table*}

\begin{figure*}[!h]
\centering
\includegraphics[width=1\linewidth]{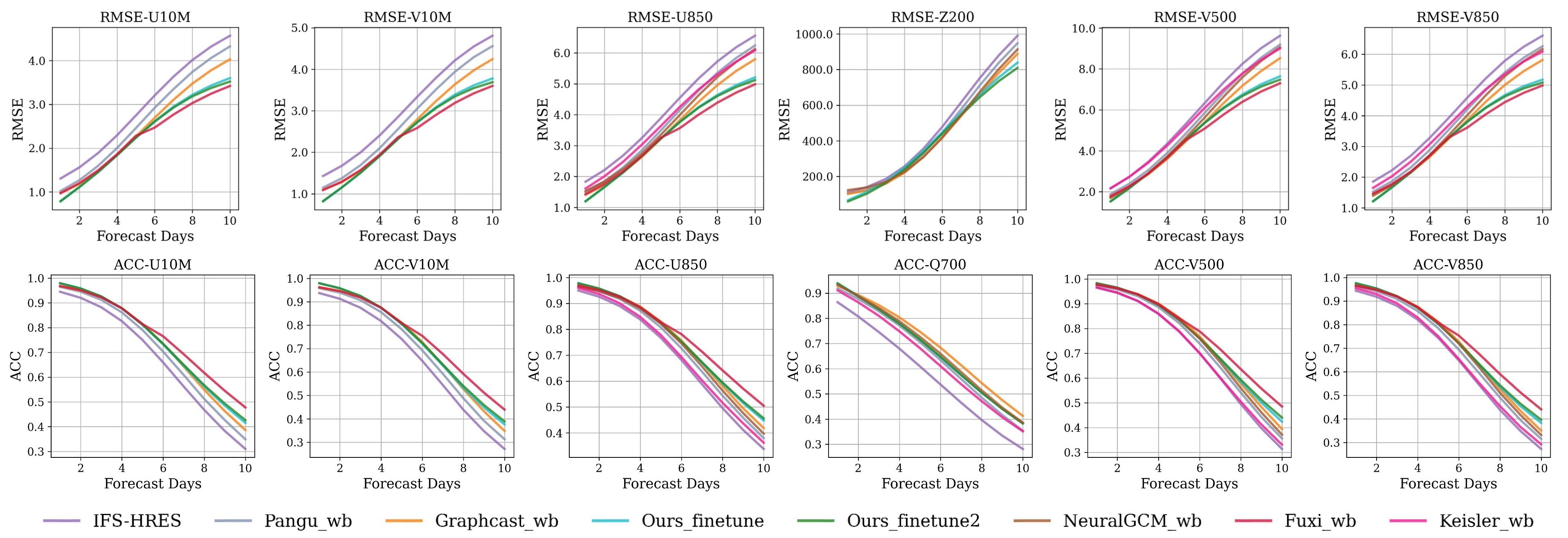}
\vspace{-20pt}
\caption{Additional quantitative comparison with the results from WeatherBench2 for several important variables.}
\label{fig_appendix_wb2}
\end{figure*}

\subsection{Additional quantitative comparison with the results from WeatherBench2}
\label{appendix_vs_wb2}
As shown in Figure \ref{fig_appendix_wb2}, we compare all models released by WeatherBench2, except for ENS (ensemble forecasting, not the same task) and Spherical CNN (too few ICs, only 178 compared with our used 700). While the WeatherBench2 baseline leverages numerous training strategies, we only conducted 1-epoch of finetuning (Ours\_finetune) during the brief rebuttal period. Nevertheless, a 2-epoch finetune model (Ours\_finetune2) demonstrates improved results, indicating the potential for further gains with additional finetuning. If we finetuned for more epochs, OneForecast can achieve better result. However, our primary objective is to introduce a novel paradigm for global and regional weather forecasting rather than solely optimizing metrics, we just finetune for a few epoch as an example.

\subsection{Additional Visual Results}
We present more additional results in Figure \ref{fig_0.25-day}, \ref{fig_0.5-day}, \ref{fig_1.0-day} \ref{fig_1.5-day}, \ref{fig_2.0-day}, \ref{fig_2.5-day}, \ref{fig_3.0-day}, \ref{fig_3.5-day}, \ref{fig_4.0-day}, \ref{fig_4.5-day}, \ref{fig_5.0-day}, \ref{fig_5.5-day}, \ref{fig_6.0-day}, \ref{fig_6.5-day}, \ref{fig_7.0-day}, \ref{fig_7.5-day},
\ref{fig_8.0-day}, \ref{fig_8.5-day}, \ref{fig_9.0-day}, \ref{fig_9.5-day},
\ref{fig_10.0-day}, including 18 variables that are importmant to weather forecasting, each with results ranging from 6 hours to 10 days. These additional results further demonstrate the effectiveness of OneForecast. Same as the Figure \ref{fig:visual_results}, the initial conditions is 00:00 UTC, 1 January 2020.

\begin{figure*}[h]
\centering
\includegraphics[width=1\linewidth]{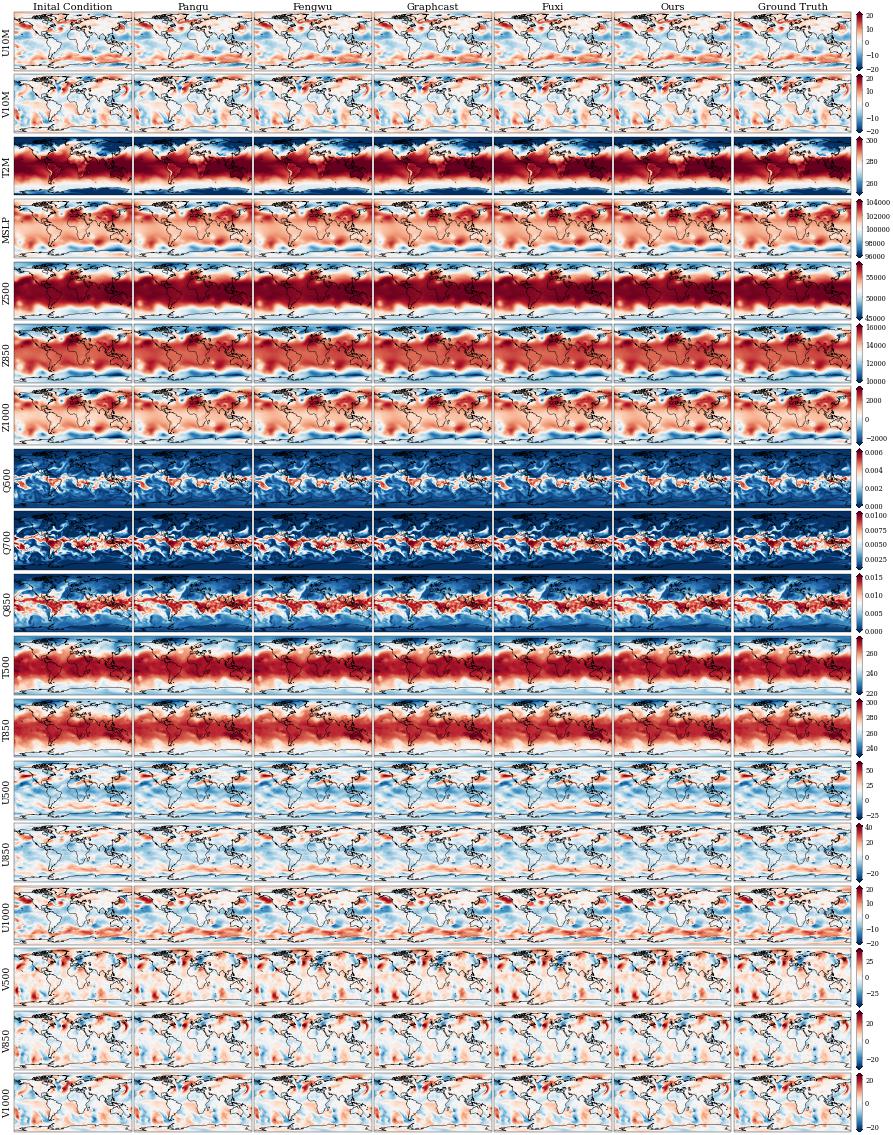}
\vspace{-20pt}
\caption{6-hour forecast results of different models.}
\label{fig_0.25-day}
\end{figure*}

\begin{figure*}[h]
\centering
\includegraphics[width=1\linewidth]{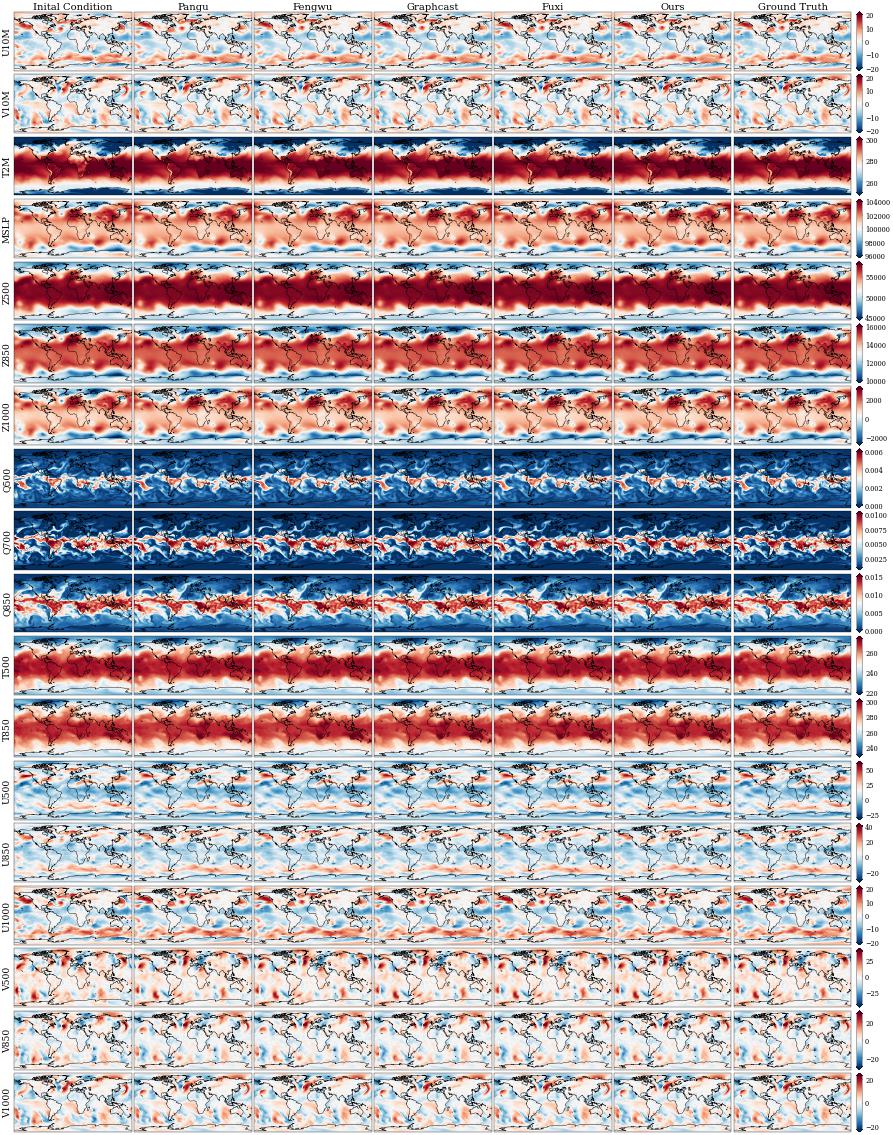}
\vspace{-20pt}
\caption{0.5-day forecast results of different models.}
\label{fig_0.5-day}
\end{figure*}

\begin{figure*}[h]
\centering
\includegraphics[width=1\linewidth]{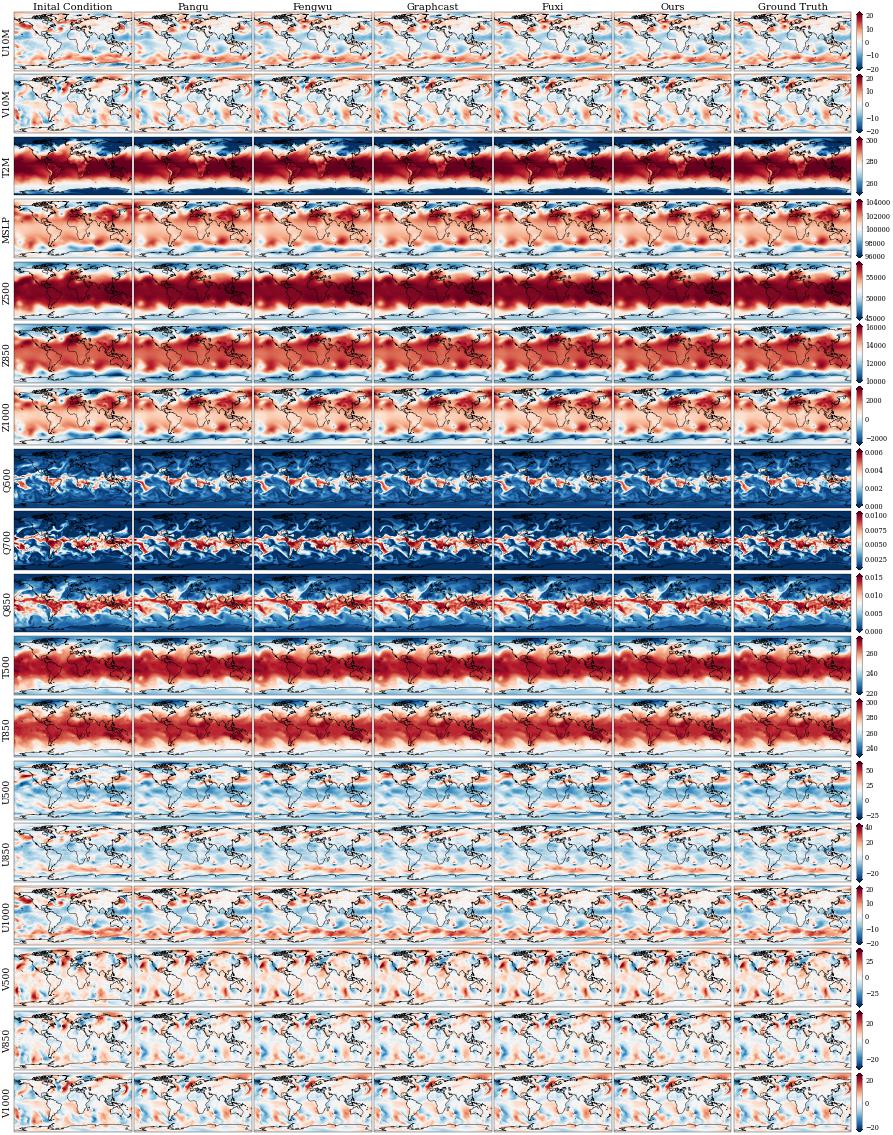}
\vspace{-20pt}
\caption{1-day forecast results of different models.}
\label{fig_1.0-day}
\end{figure*}

\begin{figure*}[h]
\centering
\includegraphics[width=1\linewidth]{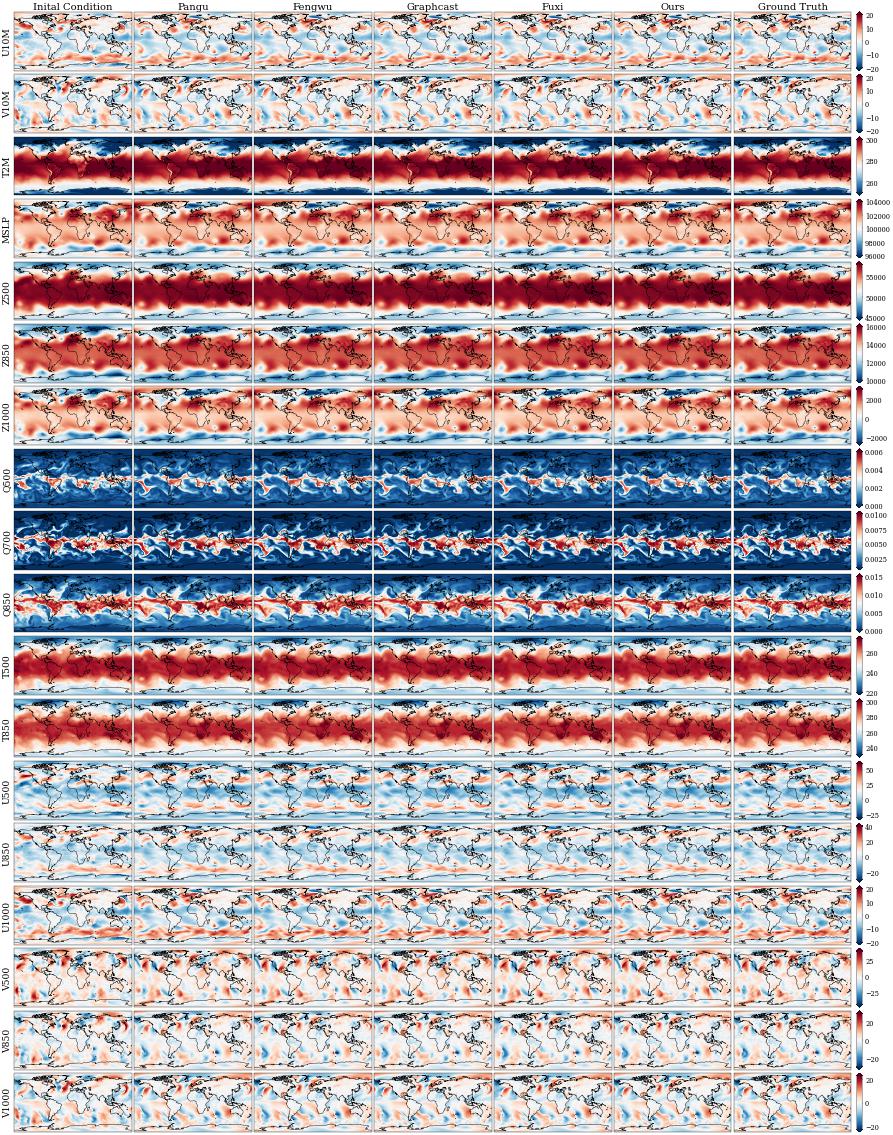}
\vspace{-20pt}
\caption{1.5-day forecast results of different models.}
\label{fig_1.5-day}
\end{figure*}

\begin{figure*}[h]
\centering
\includegraphics[width=1\linewidth]{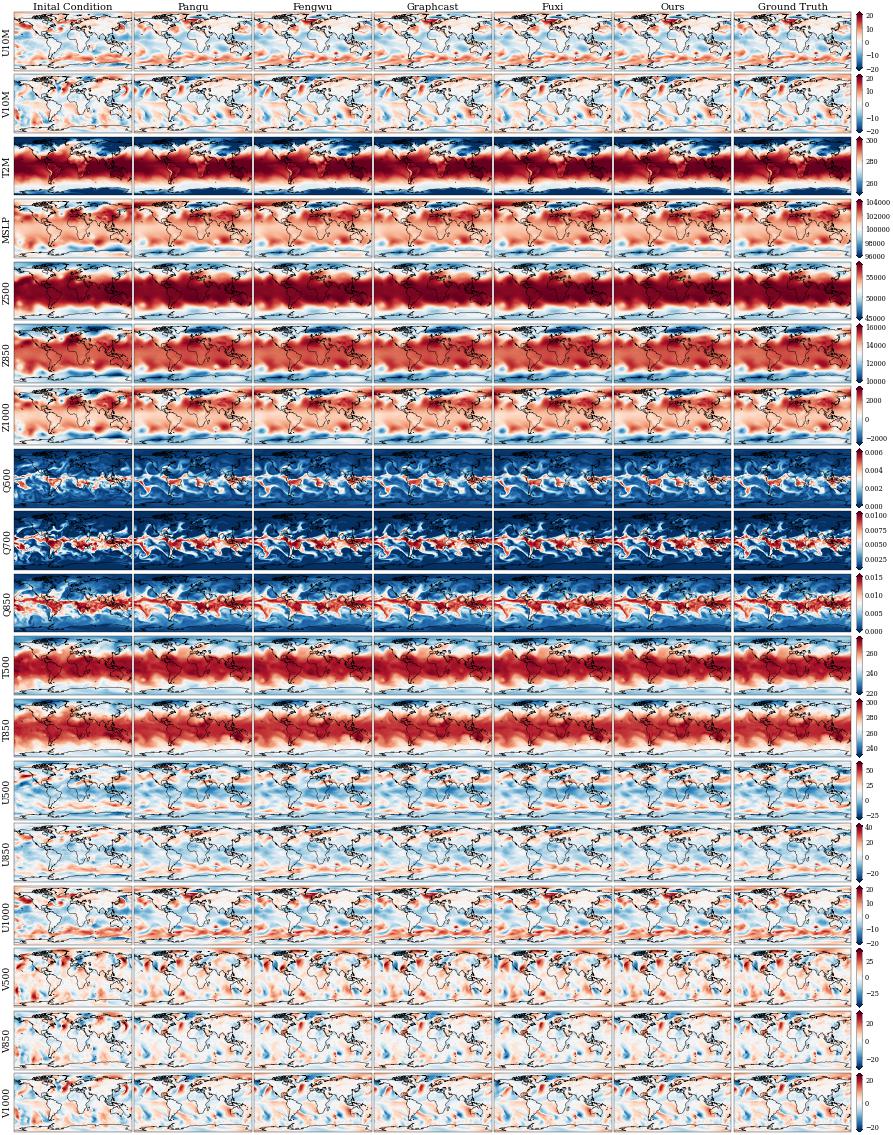}
\vspace{-20pt}
\caption{2-day forecast results of different models.}
\label{fig_2.0-day}
\end{figure*}

\begin{figure*}[h]
\centering
\includegraphics[width=1\linewidth]{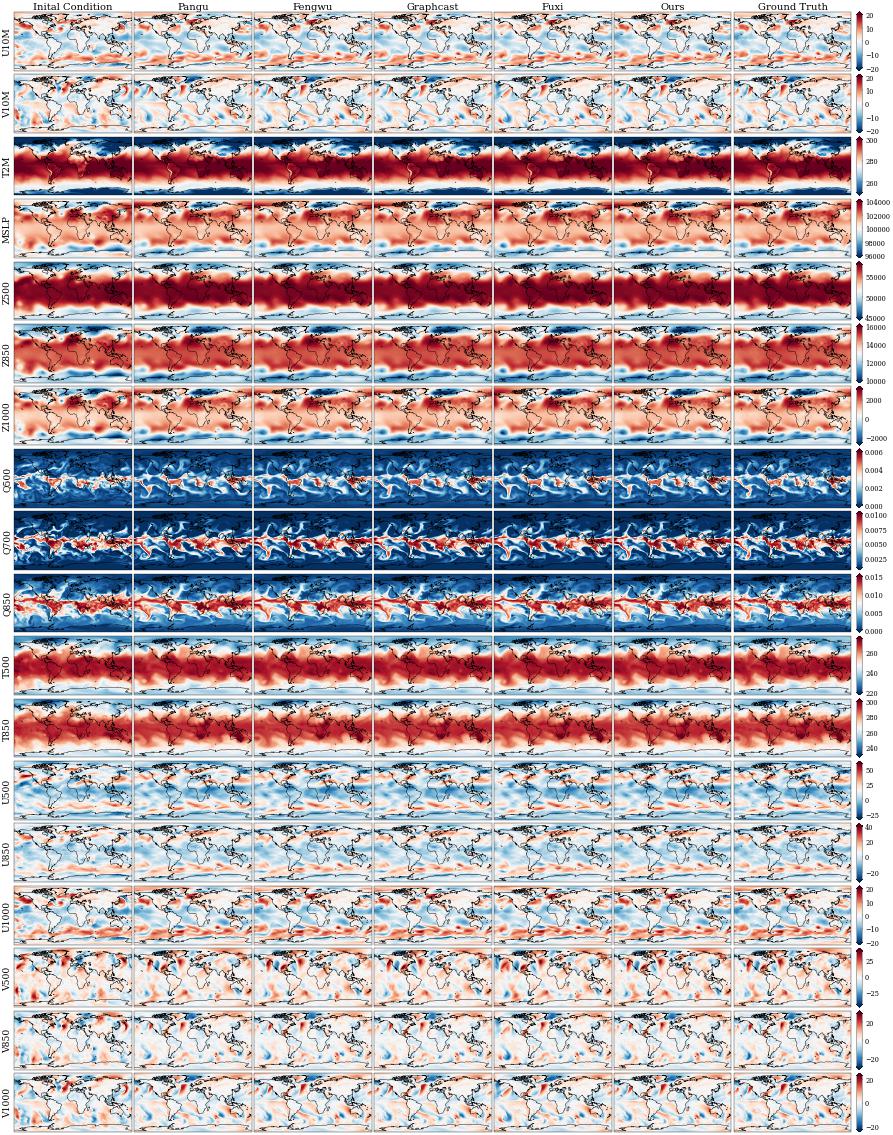}
\vspace{-20pt}
\caption{2.5-day forecast results of different models.}
\label{fig_2.5-day}
\end{figure*}

\begin{figure*}[h]
\centering
\includegraphics[width=1\linewidth]{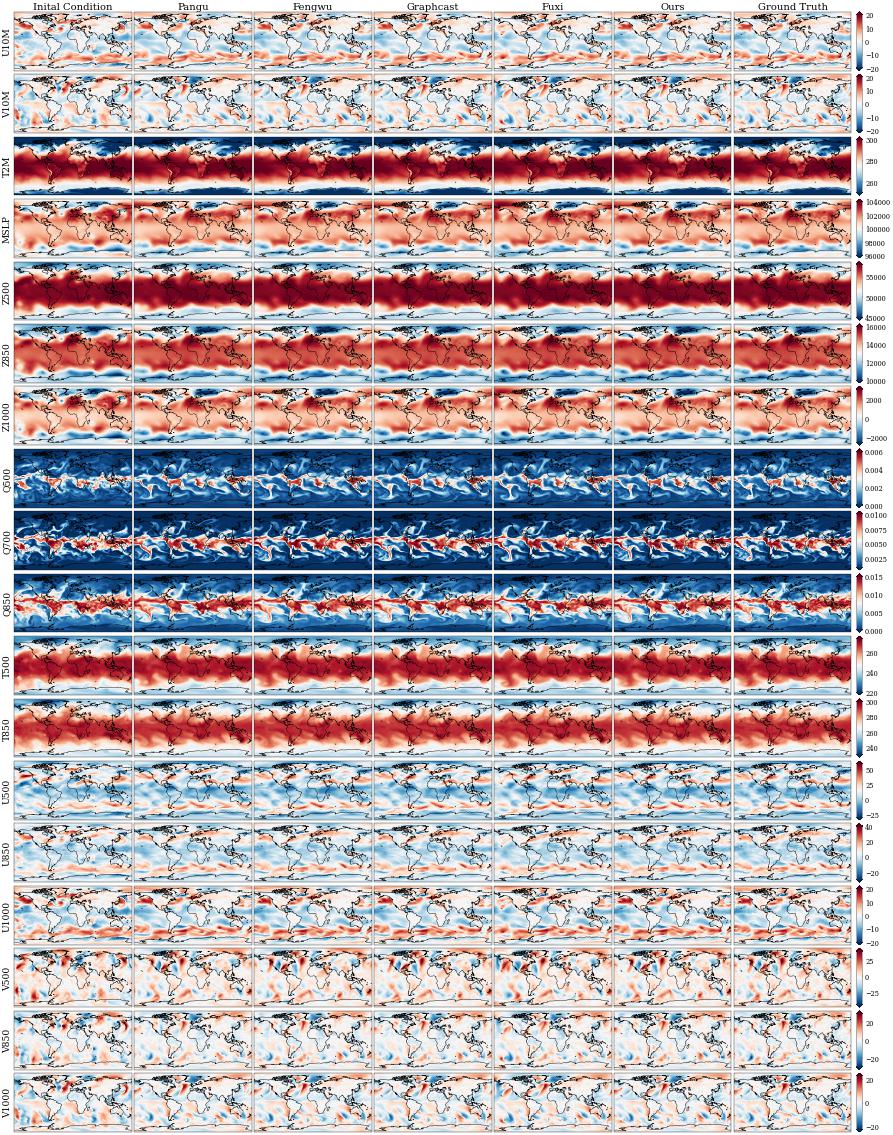}
\vspace{-20pt}
\caption{3-day forecast results of different models.}
\label{fig_3.0-day}
\end{figure*}

\begin{figure*}[h]
\centering
\includegraphics[width=1\linewidth]{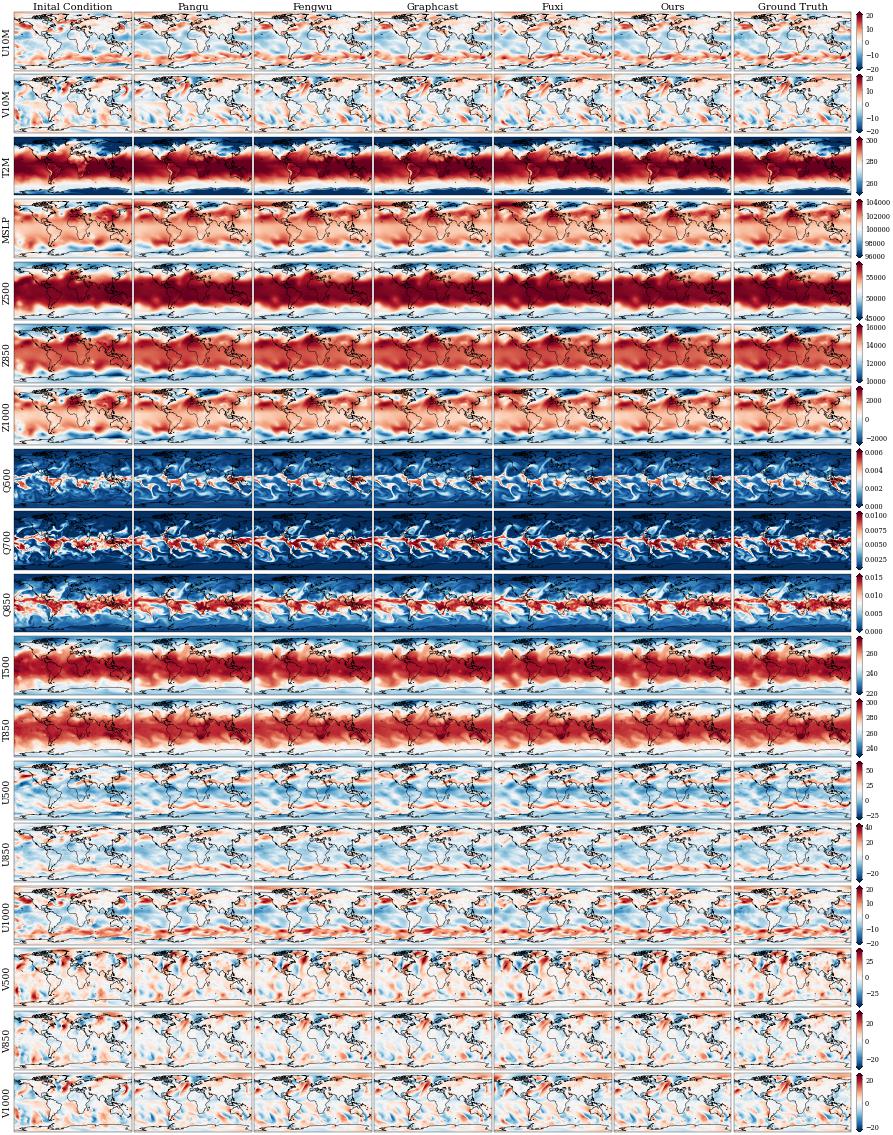}
\vspace{-20pt}
\caption{3.5-day forecast results of different models.}
\label{fig_3.5-day}
\end{figure*}

\begin{figure*}[h]
\centering
\includegraphics[width=1\linewidth]{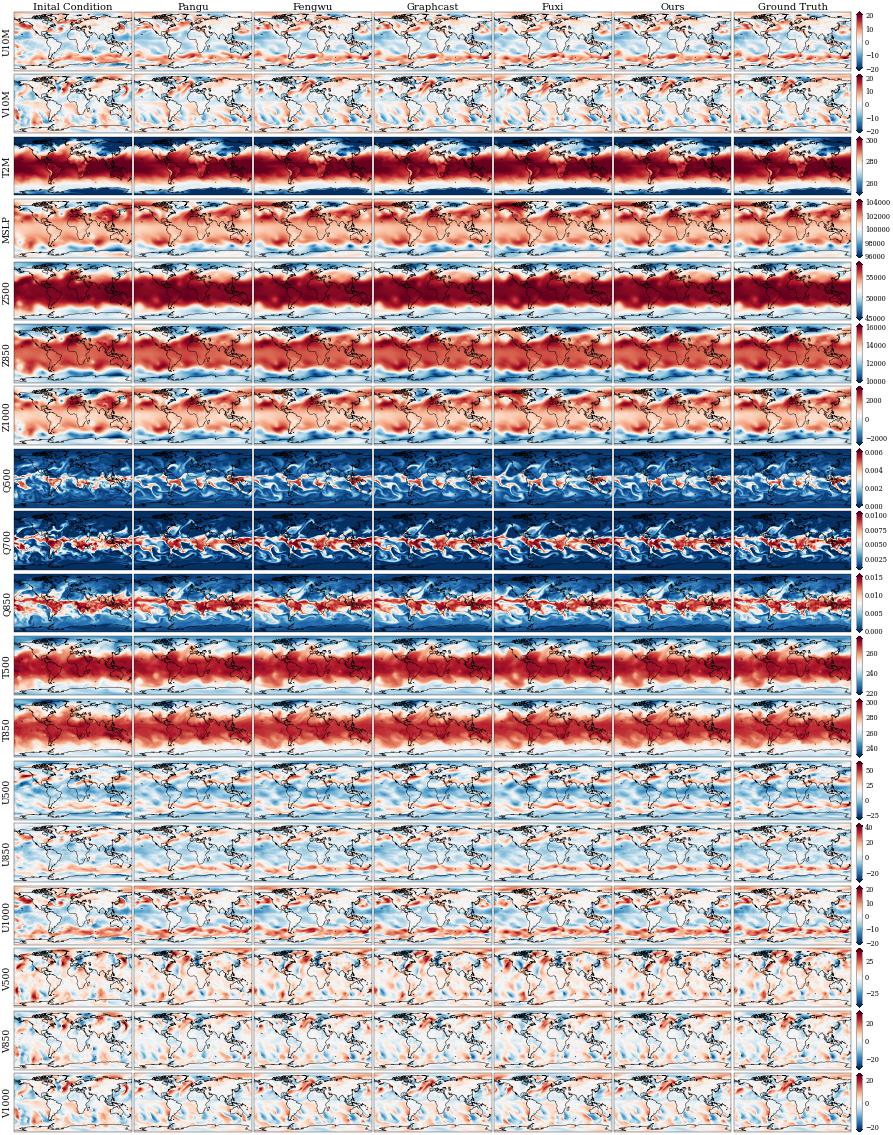}
\vspace{-20pt}
\caption{4-day forecast results of different models.}
\label{fig_4.0-day}
\end{figure*}

\begin{figure*}[h]
\centering
\includegraphics[width=1\linewidth]{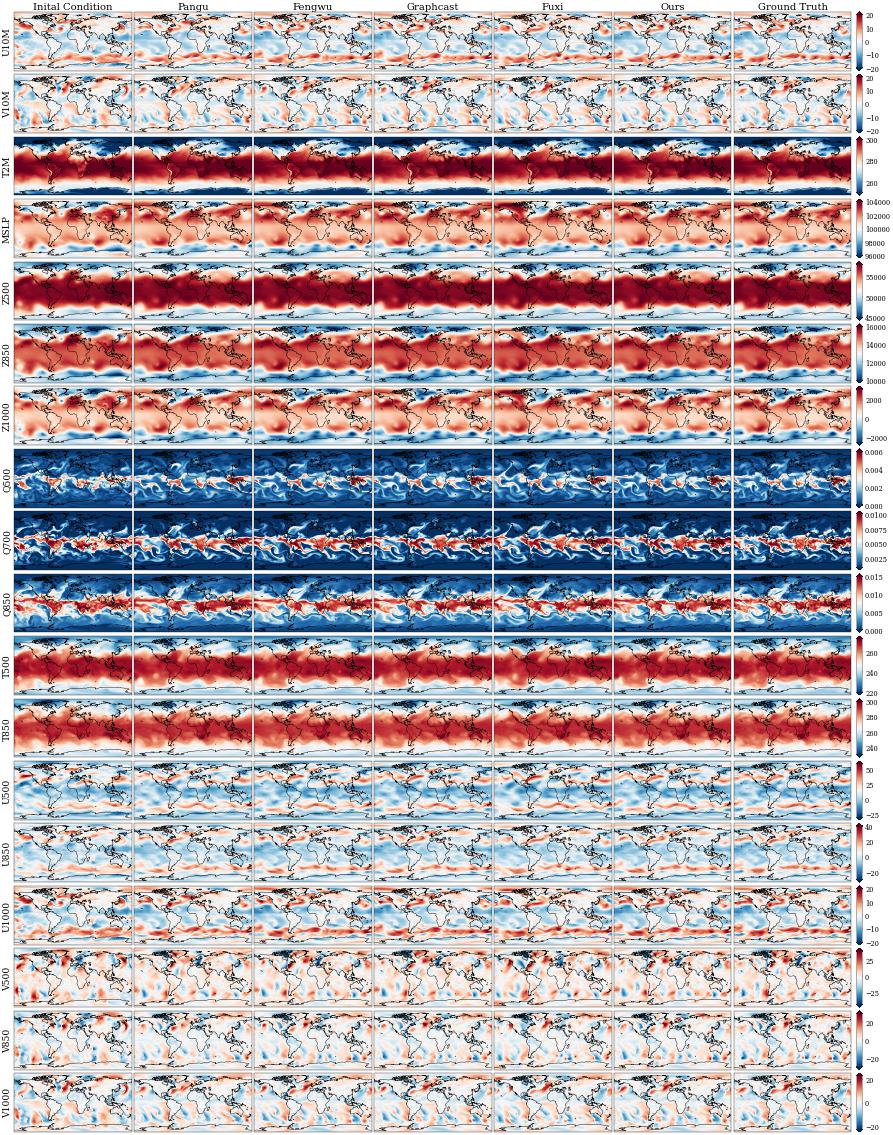}
\vspace{-20pt}
\caption{4.5-day forecast results of different models.}
\label{fig_4.5-day}
\end{figure*}

\begin{figure*}[h]
\centering
\includegraphics[width=1\linewidth]{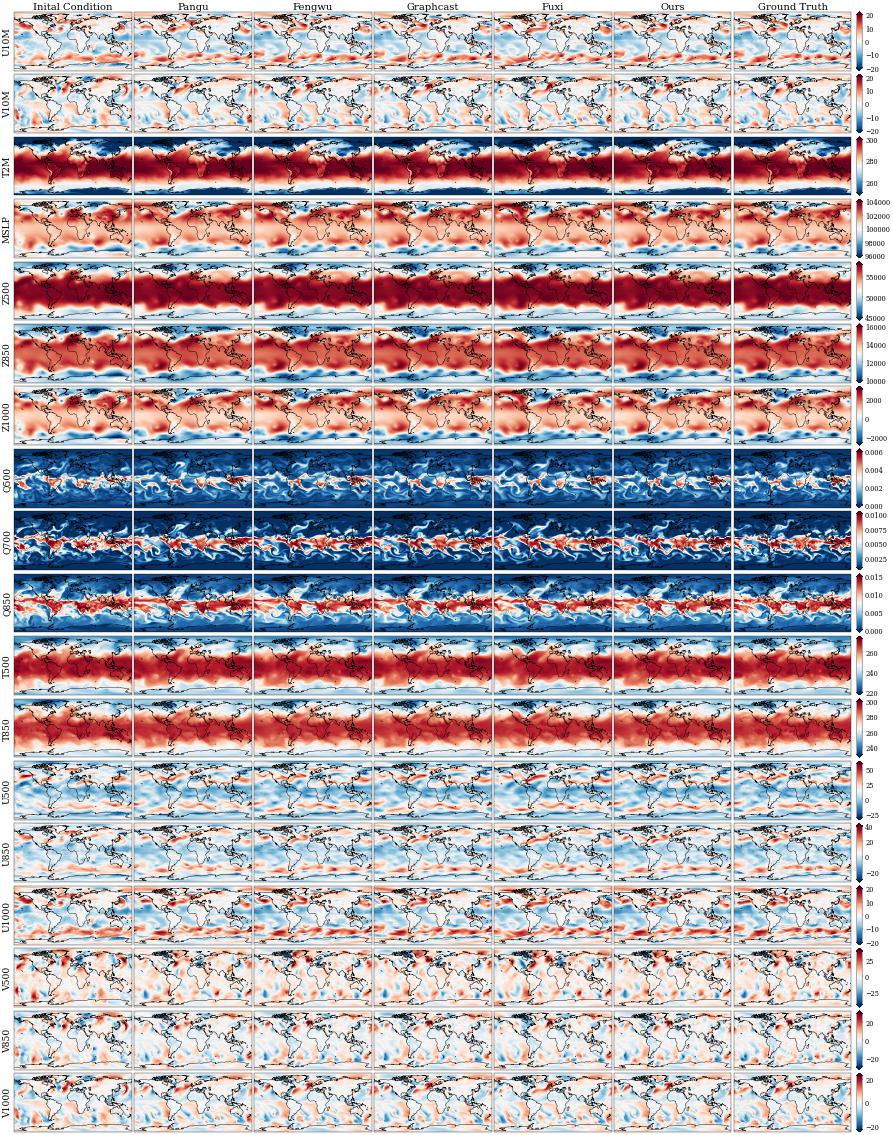}
\vspace{-20pt}
\caption{5.0-day forecast results of different models.}
\label{fig_5.0-day}
\end{figure*}

\begin{figure*}[h]
\centering
\includegraphics[width=1\linewidth]{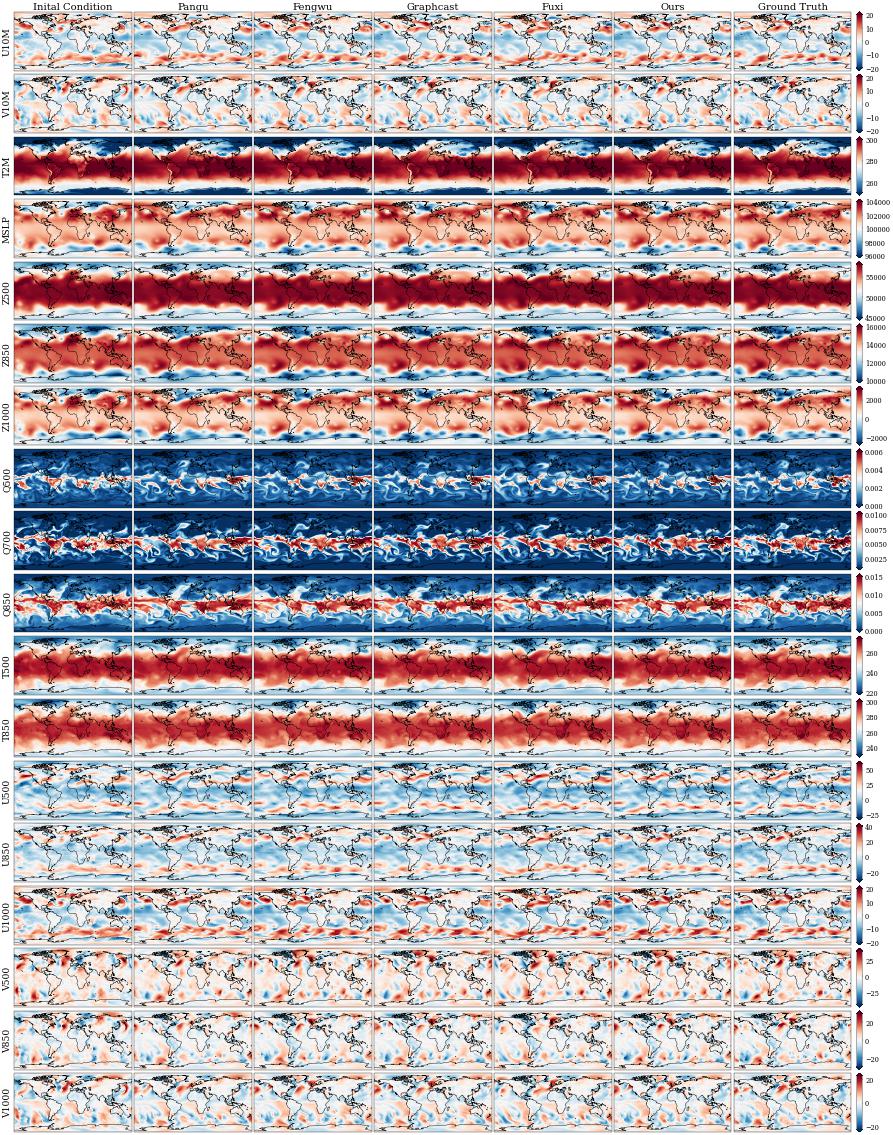}
\vspace{-20pt}
\caption{5.5-day forecast results of different models.}
\label{fig_5.5-day}
\end{figure*}

\begin{figure*}[h]
\centering
\includegraphics[width=1\linewidth]{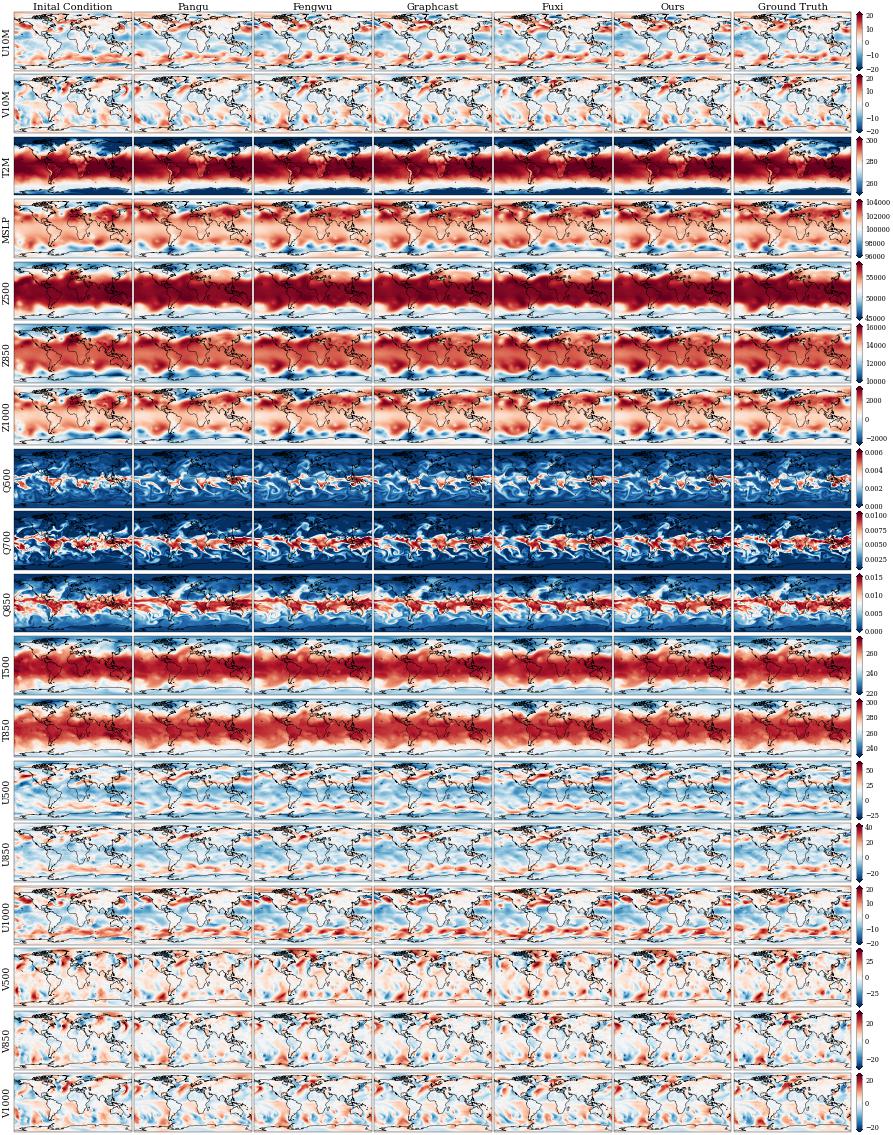}
\vspace{-20pt}
\caption{6.0-day forecast results of different models.}
\label{fig_6.0-day}
\end{figure*}

\begin{figure*}[h]
\centering
\includegraphics[width=1\linewidth]{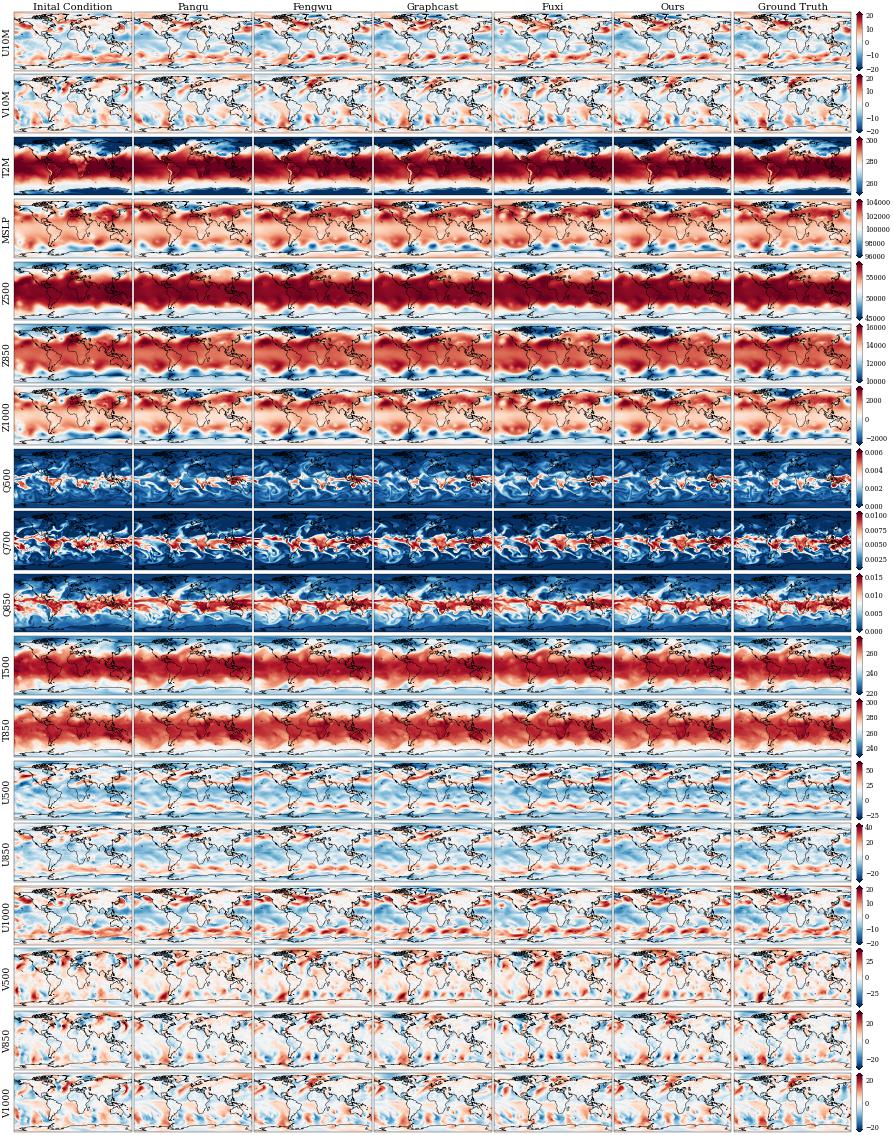}
\vspace{-20pt}
\caption{6.5-day forecast results of different models.}
\label{fig_6.5-day}
\end{figure*}

\begin{figure*}[h]
\centering
\includegraphics[width=1\linewidth]{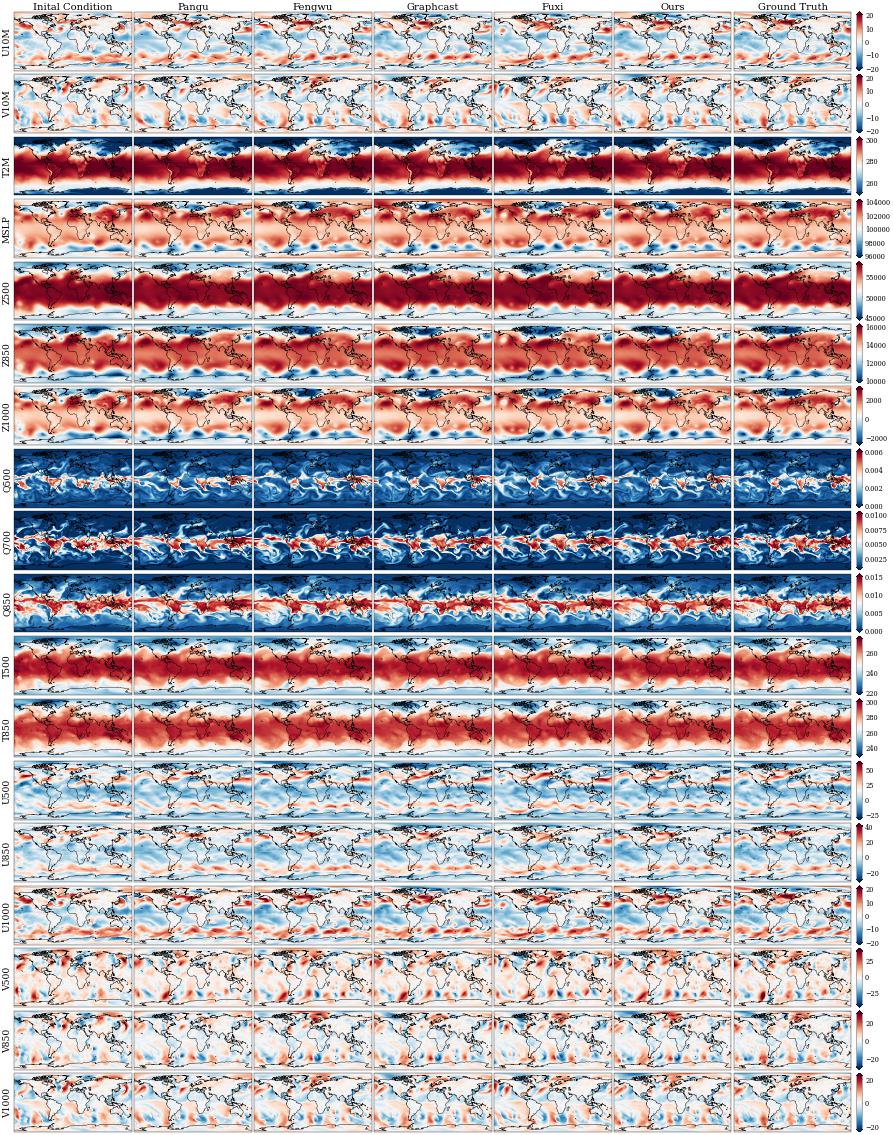}
\vspace{-20pt}
\caption{7.0-day forecast results of different models.}
\label{fig_7.0-day}
\end{figure*}

\begin{figure*}[h]
\centering
\includegraphics[width=1\linewidth]{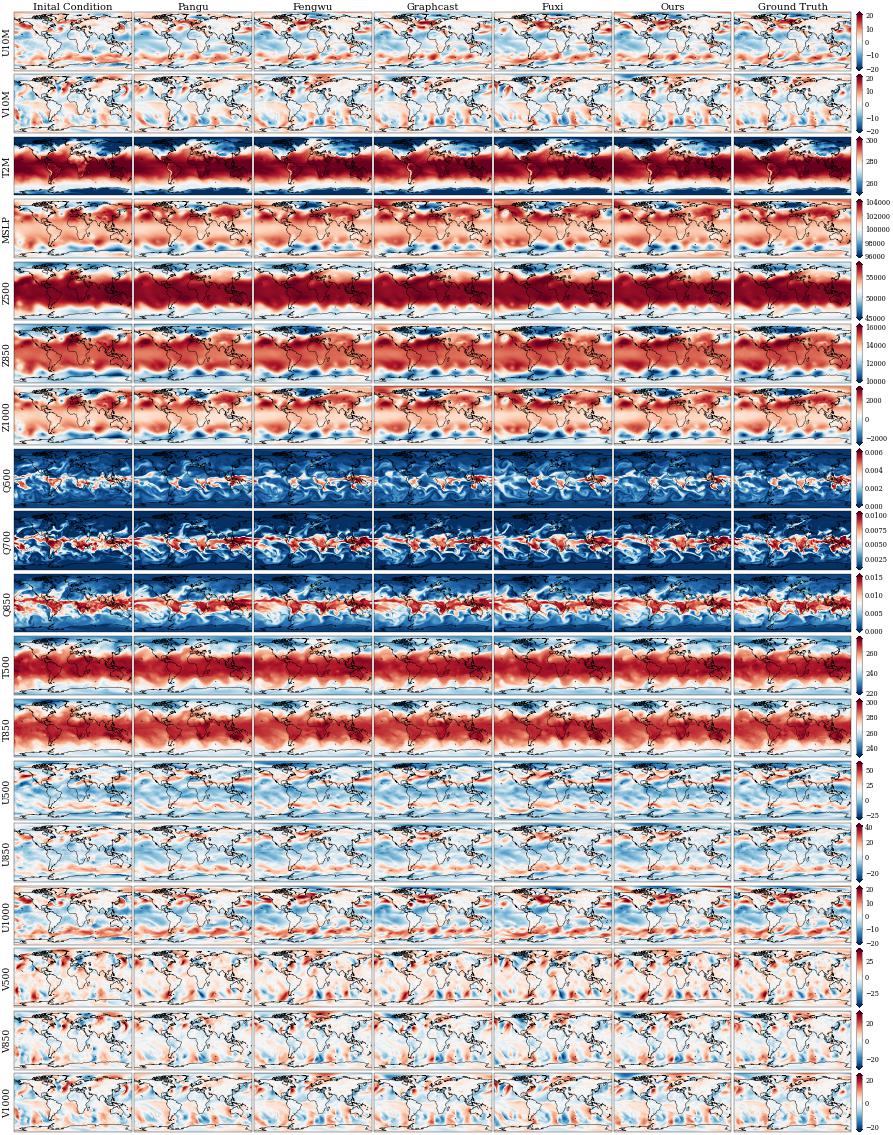}
\vspace{-20pt}
\caption{7.5-day forecast results of different models.}
\label{fig_7.5-day}
\end{figure*}

\begin{figure*}[h]
\centering
\includegraphics[width=1\linewidth]{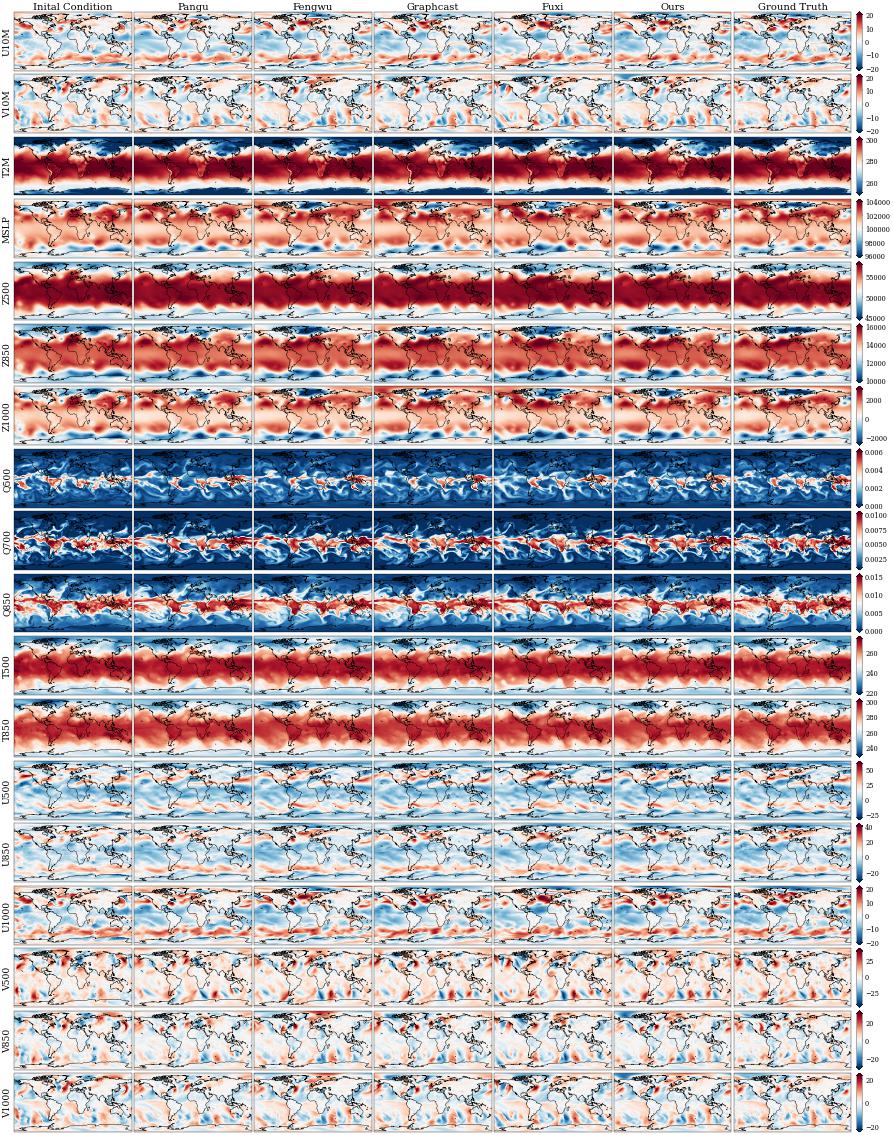}
\vspace{-20pt}
\caption{8.0-day forecast results of different models.}
\label{fig_8.0-day}
\end{figure*}

\begin{figure*}[h]
\centering
\includegraphics[width=1\linewidth]{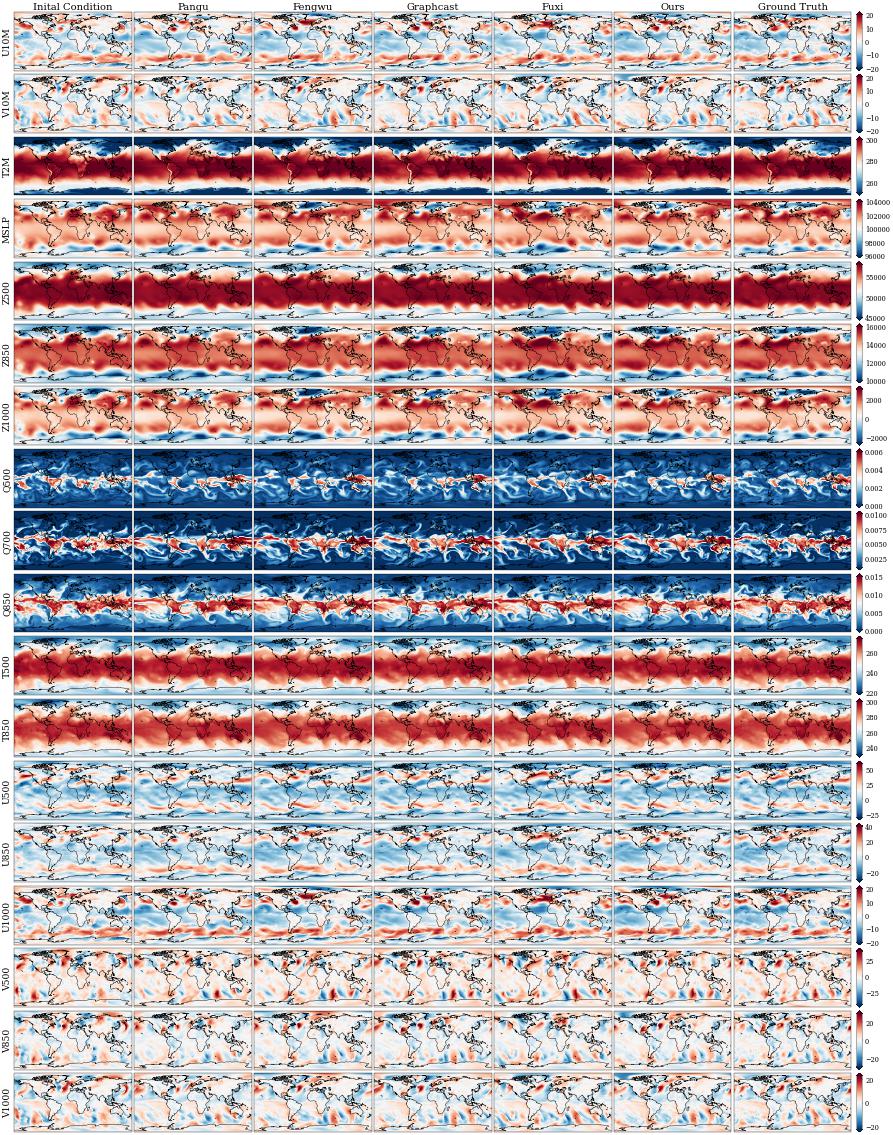}
\vspace{-20pt}
\caption{8.5-day forecast results of different models.}
\label{fig_8.5-day}
\end{figure*}

\begin{figure*}[h]
\centering
\includegraphics[width=1\linewidth]{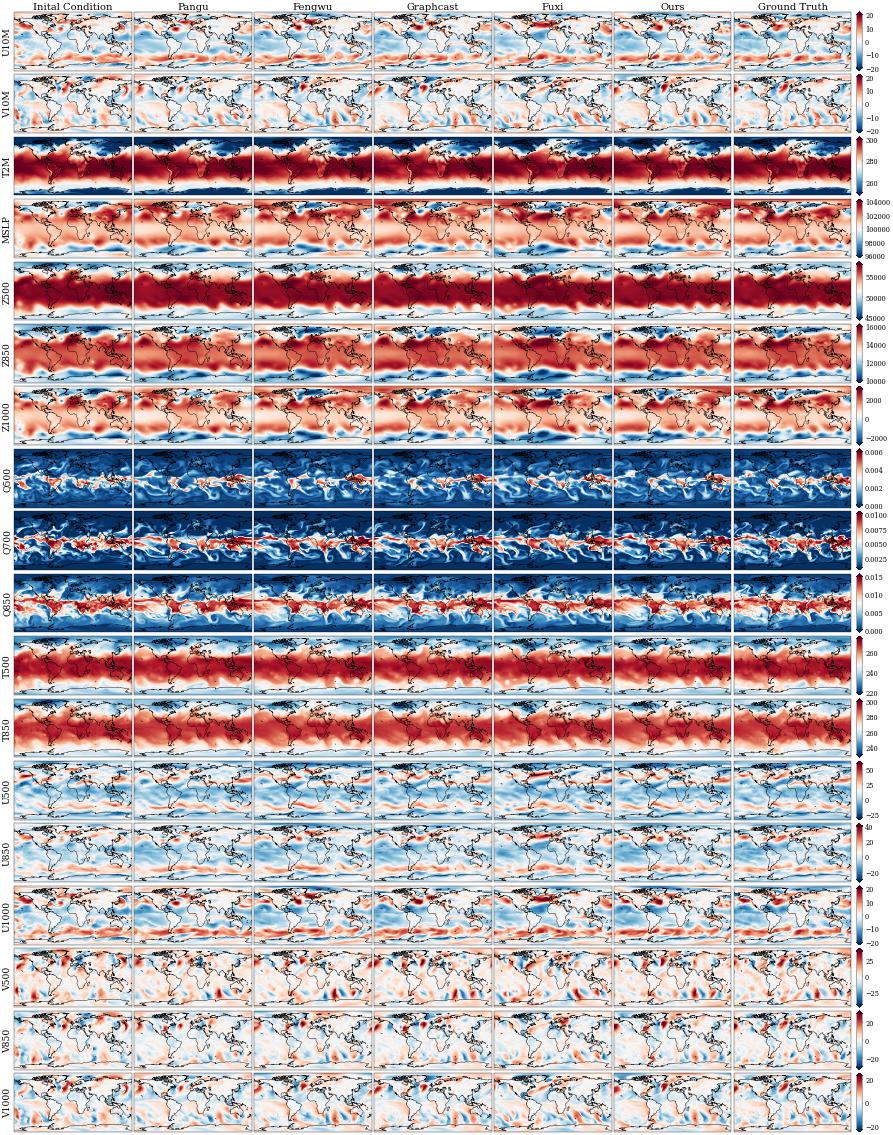}
\vspace{-20pt}
\caption{9.0-day forecast results of different models.}
\label{fig_9.0-day}
\end{figure*}

\begin{figure*}[h]
\centering
\includegraphics[width=1\linewidth]{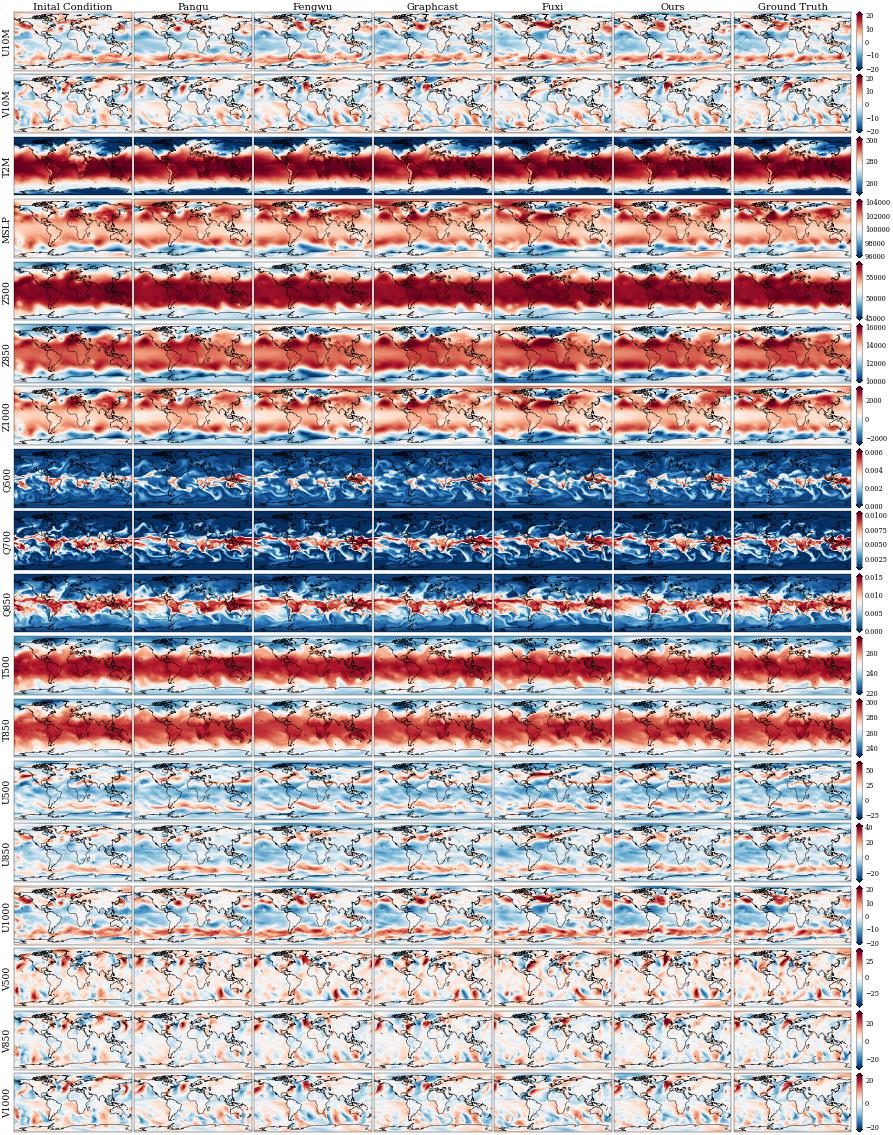}
\vspace{-20pt}
\caption{9.5-day forecast results of different models.}
\label{fig_9.5-day}
\end{figure*}

\begin{figure*}[h]
\centering
\includegraphics[width=1\linewidth]{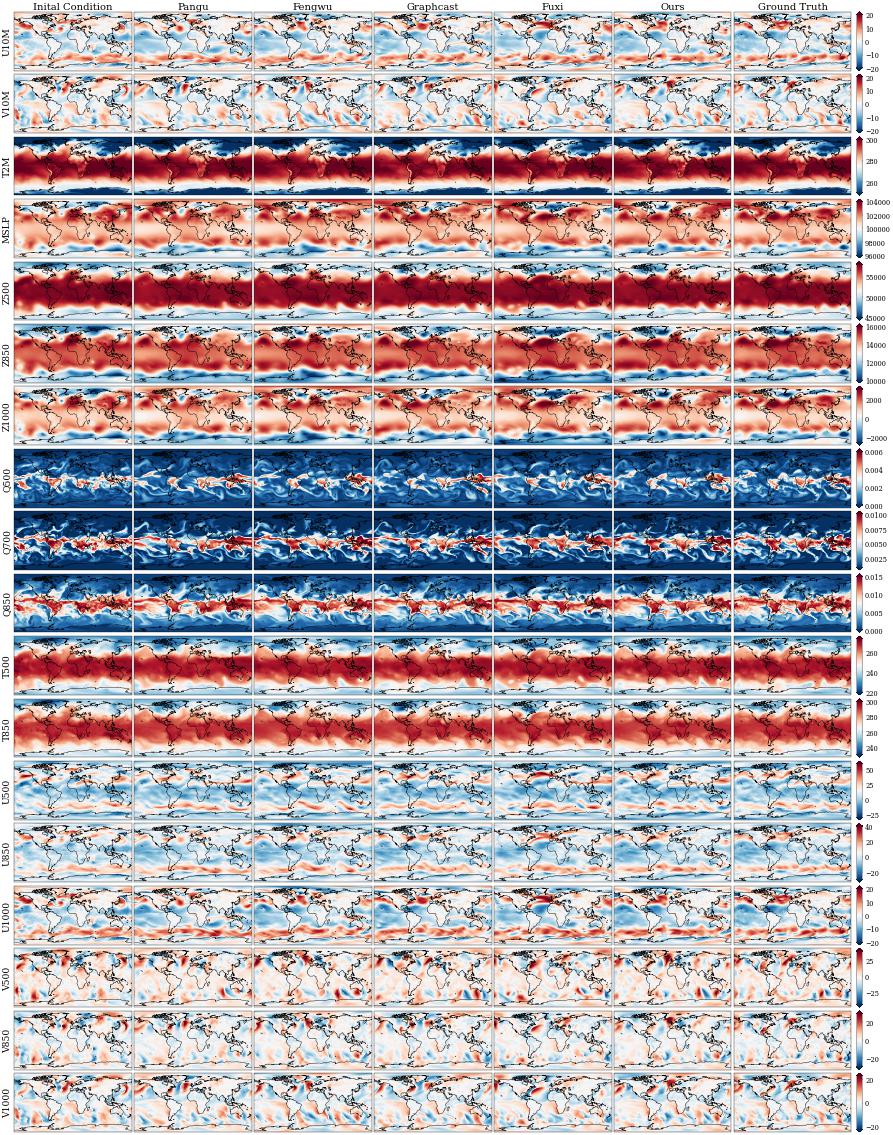}
\vspace{-20pt}
\caption{10.0-day forecast results of different models.}
\label{fig_10.0-day}
\end{figure*}


\end{document}